\documentclass{article}

\usepackage{microtype}
\usepackage{graphicx}
\usepackage{subcaption}
\usepackage{booktabs} 

\usepackage{hyperref}

\usepackage[accepted]{icml2026}

\definecolor{mydarkred}{RGB}{188,0,0}
\hypersetup{
  colorlinks=true,
  linkcolor=mydarkred,
}

\usepackage{amsmath}
\usepackage{amssymb}
\usepackage{mathtools}
\usepackage{amsthm}

\usepackage{algorithm}
\usepackage{algorithmic}
\usepackage{multirow}
\usepackage[table]{xcolor} 
\usepackage{enumitem}
\usepackage{array}
\usepackage{caption}
\usepackage{bbding}
\newcommand{\rankr}{\text{rank-}\ensuremath{r}}
\newcolumntype{A}{>{\columncolor{black!8}}c}

\usepackage[capitalize,noabbrev]{cleveref}

\newcommand{\myparagraph}[1]{\noindent\textbf{#1}\hspace{0.6em}}

\theoremstyle{plain}
\newtheorem{theorem}{Theorem}[section]

\newtheorem{lemma}[theorem]{Lemma}

\theoremstyle{definition}

\theoremstyle{remark}

\icmltitlerunning{Compress then Merge: From Multiple LoRAs into One Low-Rank Adapter}

\begin{document}

\twocolumn[
  \icmltitle{Compress then Merge: From Multiple LoRAs into One Low-Rank Adapter}



  \icmlsetsymbol{equal}{*}

  \begin{icmlauthorlist}
    \icmlauthor{Zhengbao He}{sjtu}
    \icmlauthor{Ruiqi Ding}{sjtu}
    \icmlauthor{Zhehao Huang}{sjtu}
    \icmlauthor{Ruikai Yang}{sjtu}
    \icmlauthor{Tao Li}{sjtu}
    \icmlauthor{Xiaolin Huang}{sjtu,tongren}
  \end{icmlauthorlist}

    \icmlaffiliation{sjtu}{Institute of Image Processing and Pattern Recognition, School of Automation and Intelligent Sensing, Shanghai Jiao Tong University, Shanghai, China}
    
    \icmlaffiliation{tongren}{Shanghai Key Laboratory of Flexible Medical Robotics, Tongren Hospital, Institute of Medical Robotics, Shanghai Jiao Tong University, Shanghai, China}

  \icmlcorrespondingauthor{Xiaolin Huang}{xiaolinhuang@sjtu.edu.cn}
  \icmlkeywords{Machine Learning, ICML}

  \vskip 0.3in
]



\printAffiliationsAndNotice{}  

\begin{abstract}
Low-rank adaptation (LoRA) enables parameter-efficient specialization of foundation models, but the proliferation of task-specific adapters fragments capabilities across many adapters, complicating reuse and deployment.
We study the problem of \textbf{merging $T$ LoRAs into a single rank-$r$ LoRA}, thereby preserving the benefits of low-rank structure.
Existing Merge-then-Compress pipelines treat the rank constraint as an afterthought: they merge adapters in the full parameter space,
then compress the merged result to rank $r$ via truncated SVD.
However, full-parameter merging may destroy the low-rank structure, making it difficult for subsequent compression to recover an effective rank-$r$ LoRA.
We propose Compress-then-Merge (CtM), a reversed pipeline that enforces the rank-$r$ bottleneck \emph{before} merging: CtM computes shared $r$-dimensional subspaces using only the LoRA weights to capture cross-adapter common structure, projects each adapter into the shared subspaces to obtain $r\times r$ coordinates, and then applies standard merging rules in this reduced space.
CtM guarantees a rank-$r$ LoRA by construction, avoiding post-hoc truncation, and enables efficient computation in the core space spanned by concatenated LoRA factors.
Experiments across multiple models and tasks show that CtM consistently outperforms existing single-LoRA-output baselines while narrowing the performance gap to full-parameter merging methods.
\end{abstract}

\section{Introduction}
Low-Rank Adaptation (LoRA)~\citep{DBLP:conf/iclr/HuSWALWWC22} and its variants~\citep{DBLP:conf/nips/WangYL24a, flat-lora, loram} have become a common choice for parameter-efficient fine-tuning (PEFT), enabling practitioners to specialize large foundation models with minimal overhead. This success has created a rapidly growing ecosystem of task-specific LoRA adapters, widely shared on platforms such as HuggingFace~\cite{huggingface}.
However, this also fragments capabilities across many adapters, making it difficult to reuse, maintain, and deploy them efficiently. As a result, consolidating knowledge from multiple LoRAs has become an important practical problem and a growing research focus~\cite{core-space, hu2025lorarecycle}.

Model merging~\citep{ties, cart, dare} provides a natural mechanism for consolidating multiple LoRAs into a single model.
Existing approaches~\cite{iso, knots} often merge LoRAs in the full parameter space, producing a full-sized weight update rather than a low-rank adapter.
This sacrifices LoRA’s key advantages such as modularity  and easy reuse.
For example, if one later wants to adapt the merged model to a new domain on limited hardware, the merged update needs to remain a single LoRA adapter.
In this paper, we focus on a more challenging and practical merging setting, discussed recently by \citealp{tang2025lora}: given $T$ LoRA adapters (trained from the same base model, injected into the same modules, and sharing the same rank), we aim to merge them into \textbf{one LoRA adapter} with target rank $r$, i.e., a low-rank update that preserves LoRA’s efficiency while integrating capabilities across tasks.

\begin{figure*}[t]
  \centering
  \includegraphics[width=0.9\textwidth]{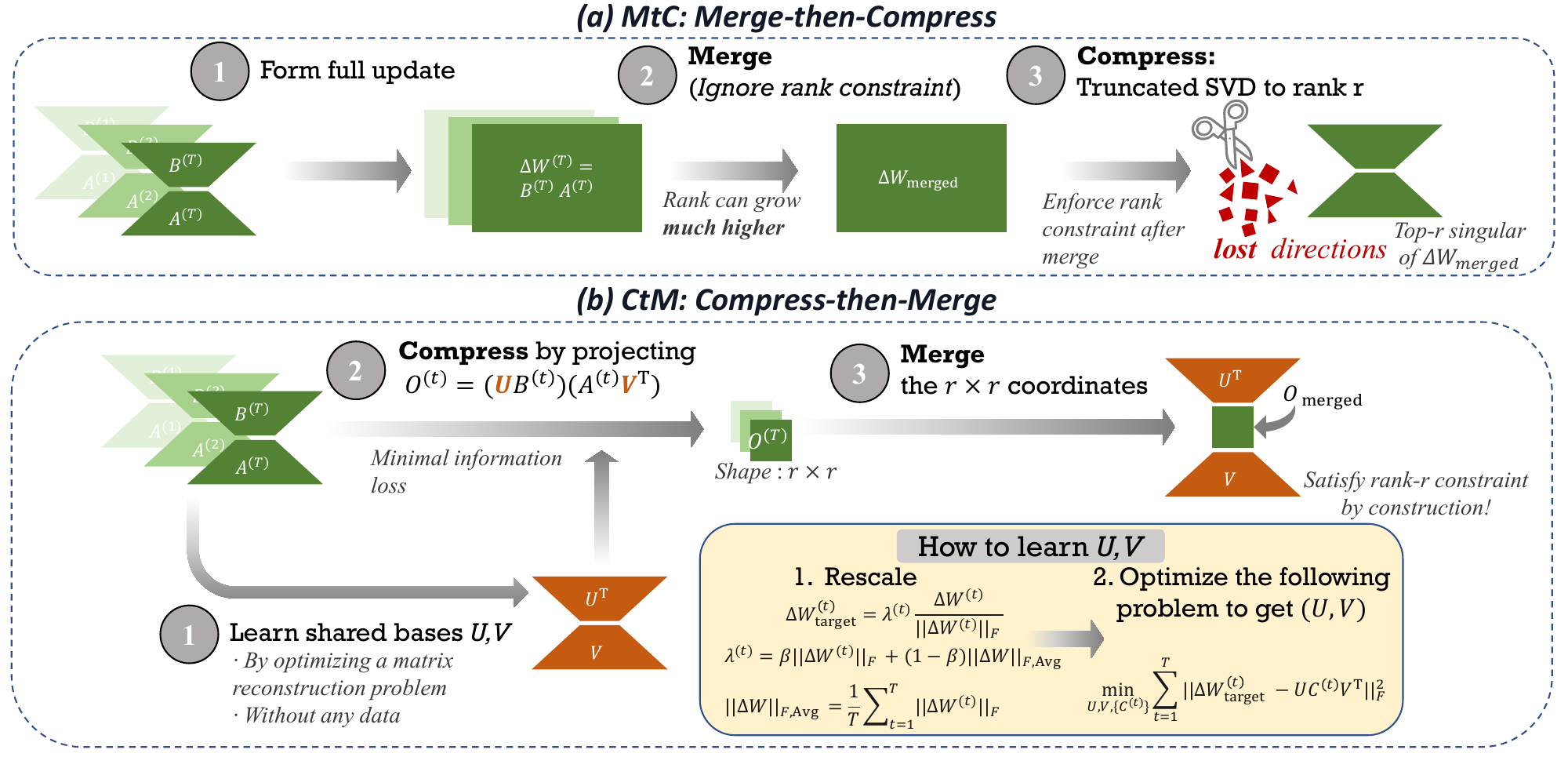}
  \caption{A rank-aware pipeline for merging multiple LoRA adapters into one: Compress-then-Merge (CtM) commits to the rank constraint before merging, avoiding brittle and unstable post-hoc truncation. }
  \label{fig:main-figure}
\end{figure*}

Among merging methods that aim to output a single LoRA, a widely adopted baseline~\citep{tang2025lora, mergekit} follows a \textbf{Merge-then-Compress} (MtC) pipeline: it first merges multiple LoRAs into a full-sized, high-rank update $\Delta W_{\mathrm{merged}}$, and then compresses it into a rank-$r$ matrix via truncated SVD.
This design makes the rank constraint an afterthought: information is unconstrained during merging, but only a small subset survives the later compression.
In the view of approximation error measured by Frobenius norm, truncated SVD is indeed optimal. However, the Frobenius norm is not always aligned with average performance across tasks, which can lead to degraded performance and hard-to-control multi-task trade-offs.
For example, LoRA updates can vary dramatically in scale~\citep{DBLP:journals/corr/abs-2505-15875}, and SVD-based compression tends to favor large-norm task updates. This can result in performance drops on all tasks but disproportionately larger losses on small-norm ones (as shown in Figure~\ref{fig:truncated_svd}). 
Moreover, LoRA training may introduce \emph{large-magnitude} noisy directions that contribute little to task accuracy~\citep{intruder-dimensions}. Once they enter the top-$r$ subspace, they can consume the rank budget, crowding out other task-relevant components.

More broadly, to produce a single \rankr\ LoRA from multiple adapters, the final merged results must lie in some $r$-dimensional left/right subspaces. 
MtC determines the subspaces \emph{after} merging, leaving the subspaces a post-hoc result of the merged result's spectrum.
At the same time, the rank-$r$ bottleneck is inherently lossy: once we commit to the final $r$-dimensional subspaces, all components orthogonal to them are irretrievably discarded.
This makes the choice of subspace to be the central design consideration, rather than an afterthought determined by the spectrum of a single merged adapter.
Recent studies~\citep{universalsubspace, DBLP:conf/cvpr/KaushikVCY25} suggest that models trained on diverse tasks tend to concentrate their weight changes in low-dimensional spectral subspaces, making shared low-dimensional subspaces a plausible and learnable bottleneck for merging.
In this paper, we propose a complementary rank-aware pipeline, \textbf{Compress-then-Merge} (CtM): we \textbf{first} learn shared $r$-dimensional left/right subspaces that capture broadly useful structure across adapters, project each LoRA into the space, and then perform merging directly within it.
This design turns subspace selection into an explicit and tunable mechanism, rather than a post-hoc by-product of the merged result.

To obtain the shared subspaces, we optimize a reconstruction objective over the rescaled input LoRA updates, without using auxiliary data. This produces a pair of basis vectors for the left/right subspaces. 
Crucially, we introduce a rescaling mechanism to avoid the magnitude dominance, ensuring that the learned subspaces reflect broadly useful directions across all tasks.
Once the shared space is established, each adapter is projected onto it and represented by a compact $r\times r$ coordinate matrix.
We then apply a base merging rule (e.g., TIES, \citealp{ties}) directly in this low-dimensional space, and finally lift the merged coordinates back to the original parameter space, which guarantees a \rankr\  LoRA update by construction. 
Moreover, by integrating CtM with the core-space framework~\cite{core-space}, we obtain a lossless acceleration of the subspace computation.
Experiments on vision and language tasks demonstrate consistent improvements over single-LoRA-output approaches.

Our contributions are summarized as follows:
\begin{itemize}[leftmargin=*, noitemsep, topsep=0pt]
\item We identify a key limitation of existing MtC pipelines: the merge process does not take the rank constraint into account, making the subsequent truncated SVD misaligned with the goal of preserving overall task utility, which may lead to unstable and degraded performance.
\item We propose CtM, which enforces the rank-$r$ constraint before merging via rescaling-aware shared-subspace learning, and merges adapters in low-dimensional $r\times r$ coordinates, producing a rank-$r$ LoRA update by construction. This makes merging both controllable and efficient, avoiding post-hoc SVD truncation.
\item Experiments across multiple architectures and tasks demonstrate that CtM consistently outperforms existing single-LoRA-output baselines while narrowing the performance gap to full-parameter merging methods. Code is available at \url{https://github.com/ZhengbaoHe/compress-then-merge}.
\end{itemize}

\section{Related Work}
\myparagraph{Model merging.}
Model merging consolidates multiple independently trained models into a single model. Most work~\citep{DBLP:conf/nips/MatenaR22, DBLP:conf/iclr/Jin0P023, DBLP:journals/corr/abs-2508-03121, DBLP:conf/eccv/JangYH24} focuses on homologous models fine-tuned from the same base checkpoint, where the parameter change is a task vector/matrix $\Delta\theta=\theta_{\mathrm{ft}}-\theta_{\mathrm{base}}$.
Task arithmetic (TA, \citealp{ta}) shows that simple averaging can yield a multitask model, and element-wise heuristics such as TIES~\citep{ties} and DARE~\citep{dare} further reduce interference via trimming, sparsification, and sign-conflict resolution.
Recent work studies merging from a matrix/subspace view, e.g., TSV~\citep{tsv}, Iso-C~\citep{iso}, and CART~\citep{cart}, but these methods are primarily designed for full-parameter updates and output dense merged weights.

\myparagraph{Merging methods for LoRA.}
A growing body of work studies how to merge LoRA adapters.
Alignment-based methods such as KnOTS~\citep{knots} and CoreSpace~\citep{core-space} build lossless bases to map adapters into a common coordinate system for subsequent merging.
However, directly merging can produce a merged update with much higher rank, so they typically rely on an additional SVD-based compression step to generate a fixed-rank merged result~\citep{tang2025lora,mergekit}.
This post-hoc truncation may be misaligned with preserving average multi-task utility under the rank constraint.
Several works also aim to produce low-rank adapters directly, including LoRA-LEGO~\citep{lego}, RobustMerge~\citep{robustmerge}, and Iso-CTS~\citep{iso}.

\myparagraph{Beyond training-free consolidation of LoRA.}
Beyond training-free merging, some methods learn to select, compose, or distill multiple LoRAs using auxiliary data to consolidate multiple LoRAs~\citep{DBLP:journals/corr/abs-2307-13269, DBLP:conf/acl/WangPWH0L024, hu2025lorarecycle, DBLP:journals/corr/abs-2510-14163, DBLP:conf/cvpr/0002WLZC25}, while others keep multiple LoRA experts and train a router to dynamically select adapters at inference time~\citep{buehler2024x, DBLP:conf/coling/FengHZHW24}.
These approaches are complementary to ours: they rely on training and/or multi-expert inference, whereas we focus on directly merging into a single fixed-rank LoRA without any training.

\myparagraph{Subspace extraction from multiple LoRAs.}
Several works extract a shared low-dimensional structure from collections of LoRA adapters.
Compress-then-Serve~\citep{compress-then-serve} jointly compresses large numbers of LoRAs into shared coordinates/bases to enable efficient multi-adapter serving.
EigenLoRAx~\citep{DBLP:conf/cvpr/KaushikVCY25} recycles pretrained adapters to identify a principal subspace that can be reused to accelerate adaptation to new tasks with very few additional parameters.
These works focus on representing, serving, or reusing \emph{many} adapters, whereas our goal is to merge multiple adapters into a \emph{single} fixed-rank LoRA under a strict rank constraint.

\section{Merge-then-Compress (MtC) Pipelines}

\subsection{Problem Setup and Notation}
\label{sec:setup}

\myparagraph{LoRA.}
Consider a linear layer with pretrained weight $W_0\in\mathbb{R}^{n\times m}$.
LoRA parameterizes a low-rank update as
$\Delta W = BA$, where $B\in\mathbb{R}^{n\times r}$ and $A\in\mathbb{R}^{r\times m}$ with
$r\ll \min(m,n)$ (the scaling is omitted since it can be absorbed into $A,B$).

\myparagraph{Merging LoRAs into one LoRA.}
We study the  merging of $T$ homogeneous LoRA adapters:
they are trained from the same base model, injected into the same target layers, and share the same
input rank $r_{\mathrm{in}}$.
Although we focus on this homogeneous setting for clarity, the formulation can also be extended to heterogeneous ranks or LoRA modules, which will be discussed in the appendix.
We perform merging in a layer-wise manner.
For a given linear layer, given $T$ LoRA updates
$\{(A^{(t)},B^{(t)})\}_{t=1}^T$ with $\Delta W^{(t)}=B^{(t)}A^{(t)}$,
our goal is to produce one \textbf{low-rank} update $\Delta W_{\mathrm{LoRA}}$ satisfying $\mathrm{rank}(\Delta W_{\mathrm{LoRA}})\le r$, without using any additional data. 
A rank-$r$ LoRA factorization can be obtained from any low-rank decomposition of $\Delta W_{\mathrm{LoRA}}$.

\subsection{Post-hoc Truncation in MtC}

To merge multiple LoRA adapters into a low-rank weight update $\Delta W_{\mathrm{LoRA}}$, a common baseline (which we refer to as \textbf{Merge-then-Compress}, MtC) has two steps: \textbf{merge} task updates into a full matrix $\Delta W_{\mathrm{merged}}$, then \textbf{compress} it into a rank-$r$ matrix via truncated SVD.

The merge step computes each task update $\Delta W^{(t)}=B^{(t)}A^{(t)}$ and applies a merging rule $\mathcal{M}$:
$
\Delta W_{\mathrm{merged}}=\mathcal{M}\!\left(\{\Delta W^{(t)}\}_{t=1}^T\right).
$
Although each $\Delta W^{(t)}$ is low-rank, $\Delta W_{\mathrm{merged}}$ can be much higher-rank (e.g., summation yields
$\mathrm{rank}(\Delta W_{\mathrm{merged}})\le Tr_{\mathrm{in}}$), often exceeding the target rank $r$.
To obtain a rank-$r$ update, prior work~\citep{tang2025lora, mergekit} typically applies truncated SVD:
\begin{equation}
\Delta W_{\mathrm{LoRA}} := \Delta \tilde{W}_{\mathrm{merged}}
= \textstyle\sum_{i=1}^{r} \sigma_i u_i v_i^\top,
\label{eq:mtc_truncated_svd}
\end{equation}
where ${(\sigma_i,u_i,v_i)}$ are the top-$r$ SVD terms of $\Delta W_{\mathrm{merged}}$.

\subsection{Limitations of MtC}

The main structural issue of MtC is that the rank constraint is enforced only \emph{after} merging. The merge step is free to accumulate information in all directions, and the subsequent truncated SVD in Eq.~\ref{eq:mtc_truncated_svd} keeps only the components that lie in the top-$r$ singular subspace and discards everything orthogonal to it. Although this SVD-based compression is optimal for minimizing Frobenius reconstruction error, this criterion often misaligns with the ultimate goal: average cross-task performance. 

Under this post-hoc constraint, any systematic bias in the merged result's spectrum directly determines which directions survive in $\Delta W_{\mathrm{LoRA}}$. 
A major source of bias is the substantial  variation of LoRA update magnitudes across tasks~\citep{DBLP:journals/corr/abs-2505-15875}: large-norm adapters tend to dominate the spectrum of $\Delta W_{\mathrm{merged}}$, so truncated SVD concentrates rank capacity on these dominant directions, thereby systematically suppresses lower-energy, misaligned components from small-norm adapters. 
A more subtle failure mode is that ``intruder dimensions'' may emerge during LoRA training~\citep{intruder-dimensions}: \emph{large-magnitude} directions that appear weakly related to downstream accuracy yet substantially erode pretrained knowledge retention, potentially complicating subsequent merging.
Since they also appear as high-energy directions, they are indistinguishable from genuinely useful components and are therefore likely to be preserved by truncated SVD.

As a consequence, MtC can easily over-allocate rank capacity to a few high-magnitude or weakly task-aligned directions, pushing other small-norm, task-relevant updates outside the $r$-dimensional subspace. This can lead to uneven and brittle per-task trade-offs: some tasks are well preserved, while others suffer disproportionate truncation-induced degradation, as illustrated empirically in Figure~\ref{fig:truncated_svd}.

\begin{figure}[ht]
  \centering
  \includegraphics[width=0.4\textwidth]{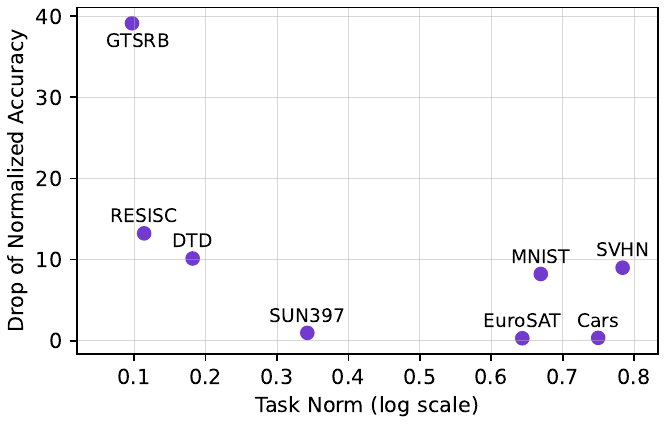}
  \caption{Truncation can induce uneven per-task degradation under a fixed rank constraint. We plot each adapter's update norm (log scale) versus the drop of normalized accuracy after truncated SVD. Setup: Iso-C in the full space, following Section~\ref{sec:exp}.}
  \label{fig:truncated_svd}
\end{figure}

\section{Compress-then-Merge (CtM) Pipeline}

Merging multiple LoRAs into a single rank-$r$ adapter inevitably incurs information loss: any component outside the final $r$-dimensional subspace must be discarded. In MtC, this loss is imposed only \emph{after} merging and is therefore largely uncontrolled. The truncation subspace is determined by the spectrum of the merged update rather than by cross-task structure. Motivated by this, we move the subspace choice \emph{upfront} and propose the \textbf{Compress-then-Merge (CtM)} pipeline. CtM explicitly learns shared $r$-dimensional subspaces to project each LoRA into $r\times r$ coordinates for subsequent merging.
The merged result is thus low-rank by construction, mitigating the loss of task-relevant information and removing the need for an additional post-hoc compression step.

\subsection{Rank-Aware Merging with Shared Subspaces}\label{sec:construct-subspace}
Our goal is to learn shared $r$-dimensional left/right subspaces with bases $U\in\mathbb{R}^{n\times r}$ and $V\in\mathbb{R}^{m\times r}$ (orthonormal columns) without any auxiliary data, to represent each LoRA in an $r\times r$ coordinate space.
We therefore frame the objective as a matrix reconstruction problem: learning low-rank bases to approximate all reweighted LoRAs:
\begin{equation}
\label{eq:main-problem}
\min_{\substack{
U^\top U = I_r,\;
V^\top V = I_r,\\
\{C^{(t)}\}_{t=1}^T
}}
\sum_{t=1}^T 
\left\| \Delta W^{(t)}_{\mathrm{target}}
- U C^{(t)} V^\top \right\|_F^2,
\end{equation}
where  $C^{(t)}\in\mathbb{R}^{r\times r}$ is a task-specific dense coordinate matrix in the learned  subspaces, $I_r$ is the $r\times r$ identity matrix, and $\Delta W^{(t)}_{\mathrm{target}}$ is a \emph{reweighted surrogate} used \emph{only} for learning $(U, V)$:
\begin{equation}
\label{eq:get-target}
\Delta W^{(t)}_{\mathrm{target}}
=
\lambda^{(t)} \frac{\Delta W^{(t)}}{\|\Delta W^{(t)}\|_F}.
\end{equation}
Normalizing each update $\Delta W^{(t)}$ by its Frobenius norm is a natural way to let all tasks contribute to the subspaces, avoiding the norm-based dominance. However, this also removes any information about how strongly each task tends to modify the base model. To softly reintroduce this scale signal, we attach an importance weight $\lambda^{(t)}$ and
define
\begin{equation}
\label{eq:get-wt}
\begin{gathered}
    \lambda^{(t)} = \beta \|\Delta W^{(t)}\|_F + (1-\beta)\|\Delta W\|_{F,\mathrm{Avg}},\\
    \|\Delta W\|_{F,\mathrm{Avg}} = \frac{1}{T} \sum_{t=1}^{T} \|\Delta W^{(t)}\|_F
\end{gathered}
\end{equation}
with $\beta\in[0,1]$, where $\|\Delta W\|_{F,\mathrm{Avg}}$ denotes the average Frobenius norm over tasks.
When $\beta=0$, all targets share the same norm $\|\Delta W\|_{F,\mathrm{Avg}}$, corresponding to scale-equalized surrogates; while $\beta=1$ recovers the original updates $\Delta W^{(t)}_{\mathrm{target}}=\Delta W^{(t)}$, but a few large-norm adapters may dominate the learning process. Choosing an intermediate $\beta$ therefore balances equalized contributions with a moderated form of norm-based weighting.
Overall, Problem~\ref{eq:main-problem} learns $(U,V)$ by jointly balancing the directional structure and a controlled scaling across tasks, aiming to retain information broadly under the rank constraint.

Crucially, the surrogate targets $\Delta W^{(t)}_{\mathrm{target}}$ are used only to learn balanced shared subspaces' bases $(U,V)$. 
The coordinates for merging are then formed from the \emph{original} LoRAs
$\Delta W^{(t)}$ so that merging reflects the true task updates:
\begin{equation}
    O^{(t)} = U^\top \Delta W^{(t)} V,\quad t=1,2,\ldots,T.
\end{equation}
We then merge these low-dimensional coordinates using a base merging rule $\mathcal{M}$:
\begin{equation}
    O_{\mathrm{merged}} = \mathcal{M}\big(\{O^{(t)}\}_{t=1}^T\big).
\end{equation}
Finally, we lift the merged
coordinate back to the original parameter space:
\begin{equation}
    \Delta W_{\mathrm{LoRA}} = U\, O_{\mathrm{merged}}\, V^\top.
\end{equation}
By construction, $\mathrm{rank}(\Delta W_{\mathrm{LoRA}})\le r$, so no additional truncation is required to
obtain a rank-$r$ LoRA update.

\subsection{Solving CtM via Tucker Decomposition}\label{sec:tucker}
To solve Problem~\ref{eq:main-problem}, we reformulate it as a Tucker decomposition problem.
Specifically, we stack the surrogate targets
$\{\Delta W^{(t)}_{\mathrm{target}}\}_{t=1}^{T}$ along a new mode to form a third-order tensor
$\mathcal{X}\in\mathbb{R}^{n\times m\times T}$ whose $t$-th frontal slice is
$\mathcal{X}_{[:,:,t]}=\Delta W^{(t)}_{\mathrm{target}}$.
Then Problem~\ref{eq:main-problem} is equivalent to
\begin{equation}
\label{eq:tucker-form}
\min_{\substack{
U^\top U = I_r,\;
V^\top V = I_r,\\
\mathcal{G}\in\mathbb{R}^{r\times r\times T}
}}
\left\|
\mathcal{X} - \mathcal{G}\times_1 U \times_2 V \times_3 I_T
\right\|_F^2,
\end{equation}
where $\times_i$ denotes the mode-$i$ tensor--matrix product.
This is a Tucker-2 decomposition with multilinear rank $(r,r,T)$.
Since we do not compress along the task mode, its factor matrix is fixed to the identity $I_T$.
Under this formulation, each frontal slice of the core tensor corresponds to a task coefficient matrix,
$\mathcal{G}_{[:,:,t]} \equiv C^{(t)}$.

To solve the Tucker decomposition problem, we adopt the standard HOSVD+HOOI strategy:
 initialize $(U_{(0)},V_{(0)})$ via Higher-Order Singular Value Decomposition (HOSVD, \citealp{hosvd}), and then refine them
by Higher-Order Orthogonal Iteration (HOOI, \citealp{hooi}). Please refer to Appendix~\ref{app:sec-time-complexity} for details.

\begin{table*}[!ht]
\centering
\caption{Normalized accuracies on 8 vision tasks with ViT-B/32. For MtC baselines, we report the average performance without compression,  denoted as $\mathrm{Avg}_\mathrm{full}$.}
\label{table:vit}
\setlength{\tabcolsep}{4pt}
\small
\begin{tabular}{ll cccccccc Ac}

\toprule
Method & Space & Cars & DTD & EuroSAT & GTSRB & MNIST & RESISC & SUN397 & SVHN & Avg & $\mathrm{Avg}_\mathrm{full}$ \\
\midrule
\multicolumn{2}{l}{\textit{LoRA's~absolute~accuracy}}  & \textit{74.00} & \textit{58.30} & \textit{99.00} & \textit{92.70} & \textit{99.30} & \textit{88.40} & \textit{64.50} & \textit{96.20} & -- & --
\\
\midrule
LoRA-LEGO        & -- & 81.42 & 72.72 & 47.03 & 38.89 & 50.88 & 69.83 & 97.21 & 40.90 & 62.36 & -- \\
RobustMerge & -- & 81.38 & 74.91 & 52.67 & 44.71 & 55.15 & 69.56 & 97.17 & 38.94 & 64.31 & -- \\
Iso-CTS & -- & 81.93 & 76.91 & 42.99 & 53.01 & 59.42 & 74.10 & 98.35 & 43.98 & 66.34 & -- \\
\midrule 
\multirow{4.5}{*}{Iso-C}
& Full      & 82.98 & 74.18 & 49.42 & 42.96 & 62.95 & 71.18 & 97.99 & 44.14 & 65.72 & 75.91  \\
& KnOTS     & 80.37 & 73.81 & 46.35 & 38.76 & 48.94 & 69.83 & 97.32 & 37.17 & 61.57 & 64.61  \\
& CoreSpace & 82.98 & 74.08 & 49.31 & 42.92 & 63.10 & 71.12 & 97.99 & 44.17 & 65.71 & 75.87  \\
\arrayrulecolor{lightgray} 
\cmidrule[0.25pt]{2-12} 
\arrayrulecolor{black}
&  CtM (Ours)  & 80.83 & 77.55 & 44.37 & 56.45 & 56.03 & 73.69 & 97.72 & 43.55 & 66.27 & -- \\
\midrule 
\multirow{4.5}{*}{TIES}
& Full      & 82.51 & 72.90 & 49.94 & 36.93 & 55.41 & 69.22 & 96.95 & 43.15 & 63.38 & 63.94 \\
& KnOTS     & 82.35 & 72.99 & 46.61 & 38.66 & 56.50 & 69.27 & 96.44 & 44.22 & 63.38 & 64.40 \\
& CoreSpace & 83.54 & 73.81 & 53.09 & 39.97 & 65.18 & 68.05 & 95.25 & 43.73 & 65.33 & 67.73 \\
\arrayrulecolor{lightgray} 
\cmidrule[0.25pt]{2-12} 
\arrayrulecolor{black}
& CtM (Ours) & 82.87 & 75.27 & 64.46 & 38.50 & 78.22 & 70.67 & 97.07 & 50.93 & 69.75 & -- \\
\midrule
\multirow{4.5}{*}{DARE-TIES}
& Full      & 82.51 & 72.81 & 49.57 & 37.03 & 56.12 & 69.31 & 96.90 & 42.75 & 63.37 & 64.03 \\
& KnOTS     & 82.35 & 72.99 & 46.61 & 38.66 & 56.50 & 69.27 & 96.44 & 44.22 & 63.38 & 64.55 \\
& CoreSpace & 84.01 & 73.45 & 53.95 & 40.90 & 64.55 & 69.13 & 96.17 & 45.10 & 65.91 & 69.55 \\
\arrayrulecolor{lightgray} 
\cmidrule[0.25pt]{2-12} 
\arrayrulecolor{black}
& CtM (Ours) & 83.04 & 75.00 & 64.95 & 38.28 & 79.98 & 69.90 & 96.53 & 57.73 & 70.68 & -- \\
\bottomrule
\end{tabular}
\end{table*}

\subsection{Core-Space Acceleration for Subspace Learning}\label{sec:accelerate-in-core-space}

To accelerate the optimization of Problem~\ref{eq:main-problem}, we adopt the core-space framework \citep{core-space}. This framework compresses the parameter space into a smaller \emph{core space} that exactly preserves all task updates.
Concretely, we build two semi-orthogonal bases
$U_B\in\mathbb{R}^{n\times (Tr_{\mathrm{in}})}$ and $V_A\in\mathbb{R}^{m\times (Tr_{\mathrm{in}})}$ (i.e., $U_B^\top U_B=V_A^\top V_A=I_{Tr_{\mathrm{in}}}$) that span the union of LoRA left and right factors across tasks. These bases are obtained from the thin SVDs of the concatenated LoRA factors:
\begin{equation}
\label{eq:core-space}
\begin{gathered}
B_{\mathrm{cat}}
= \big[ B^{(1)}, \ldots, B^{(T)} \big]
= U_B \Sigma_B V_B^\top \in \mathbb{R}^{n\times Tr_{\mathrm{in}}}, \\
A_{\mathrm{cat}}  = \big[ A^{(1)}; \ldots; A^{(T)} \big]
= U_A \Sigma_A V_A^\top  \in \mathbb{R}^{ Tr_{\mathrm{in}} \times m}. \\
\end{gathered}
\end{equation}

Using $U_B $ and $V_A$, each update is represented in the \emph{core space} as:
\begin{equation}
    \label{eq:compute_mt}
    \begin{aligned}
    M^{(t)} &= \left(U_B^\top B^{(t)}\right) \left(A^{(t) } V_A\right) \\ &=  U_B^\top \Delta W^{(t)} V_A \in \mathbb{R}^{(Tr_{\mathrm{in}})\times(Tr_{\mathrm{in}})},
    \end{aligned}
\end{equation}
and $M^{(t)}_\mathrm{target}$ can be computed following Eq.~\ref{eq:get-target}-\ref{eq:get-wt}.
For notational simplicity, we denote this coordinate space as $\mathbb{R}^{(Tr_{\mathrm{in}})\times(Tr_{\mathrm{in}})}$. 
In practice, $B_{\mathrm{cat}}$ and $A_{\mathrm{cat}}$ are typically full column/row-rank. 
If rank-deficient, $Tr_{\mathrm{in}}$ can be replaced by the actual rank without affecting the results.

The following theorem formalizes the \emph{lossless} nature of this projection: it preserves every LoRA update exactly, and consequently, the surrogate targets as well (since they differ only by scaling). Proofs are in Appendix~\ref{app:proof-lossless}.

\begin{theorem}[Lossless projection into the core space]
\label{thm:lossless-projection}
Let $U_B$ and $V_A$ be computed by
Eq.~\ref{eq:core-space}. Then, for any $t\in\{1,2,\ldots,T\}$,
$\Delta W^{(t)} = U_B U_B^\top \, \Delta W^{(t)} \, V_A V_A^\top$ holds.
\end{theorem}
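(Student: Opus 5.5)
The plan is to show separately that $U_BU_B^\top$ fixes every column of $B^{(t)}$ and that $V_AV_A^\top$ fixes every row of $A^{(t)}$. The claim then follows by associativity:
\[
U_BU_B^\top \Delta W^{(t)} V_AV_A^\top = (U_BU_B^\top B^{(t)})(A^{(t)}V_AV_A^\top) = B^{(t)}A^{(t)} = \Delta W^{(t)}.
\]
The key fact is that, in a thin SVD $X = U\Sigma V^\top$ with $\Sigma$ restricted to its nonzero singular values, the columns of $U$ form an orthonormal basis of $\mathrm{col}(X)$. Hence $UU^\top$ is the orthogonal projector onto $\mathrm{col}(X)$ and acts as the identity on it.

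\textbf{Left factor.} From $B_{\mathrm{cat}} = U_B\Sigma_BV_B^\top$ and $U_B^\top U_B = I$ we get $U_BU_B^\top B_{\mathrm{cat}} = U_B\Sigma_BV_B^\top = B_{\mathrm{cat}}$. This is a purely algebraic identity and needs no span argument. The matrix $B^{(t)}$ is the block of columns $t$ of $B_{\mathrm{cat}}$, namely $B^{(t)} = B_{\mathrm{cat}}E_t$, where $E_t$ selects columns $(t-1)r_{\mathrm{in}}+1$ through $tr_{\mathrm{in}}$. Multiplying on the right by $E_t$ gives $U_BU_B^\top B^{(t)} = B^{(t)}$.

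\textbf{Right factor.} Symmetrically, $A_{\mathrm{cat}} V_AV_A^\top = U_A\Sigma_AV_A^\top V_AV_A^\top = A_{\mathrm{cat}}$, using $V_A^\top V_A = I$. Selecting the $t$-th row block, $A^{(t)} = E_t^\top A_{\mathrm{cat}}$, gives $A^{(t)}V_AV_A^\top = A^{(t)}$.

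The only subtle point is the rank-deficient case that the paper mentions. There the thin SVD must be taken with the actual rank, so that $U_B$ and $V_A$ remain semi-orthogonal. If instead one keeps $Tr_{\mathrm{in}}$ columns, some singular values are zero, and the corresponding extra orthonormal columns are still harmless. In either case the identity $U_B^\top U_B = I$ together with the SVD factorization is all the argument uses, so the derivation goes through unchanged. I expect no real obstacle beyond stating this convention clearly.
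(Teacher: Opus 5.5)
Your proposal is correct and follows essentially the same route as the paper: show that $U_BU_B^\top$ fixes $B^{(t)}$ and $V_AV_A^\top$ fixes $A^{(t)}$, then combine them through $\Delta W^{(t)} = B^{(t)}A^{(t)}$. The only cosmetic difference is in how the fixing is shown. The paper uses the inclusion $\mathrm{Col}(B^{(t)}) \subseteq \mathrm{Col}(U_B)$ and the projector property, whereas you verify $U_BU_B^\top B_{\mathrm{cat}} = B_{\mathrm{cat}}$ directly from the SVD and then extract the $t$-th block, which also makes your remark on the rank-deficient case transparent.
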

Moreover, the next theorem shows that, without loss of optimality, we can restrict our search to solutions with column spaces contained in $\mathrm{Col}(U_B)$ and $\mathrm{Col}(V_A)$. This allows the problem to be reformulated equivalently in the reduced coordinate space later (proof in Appendix~\ref{app:proof-exist}).

\begin{theorem}[Existence of optimal solutions confined to core space]
\label{thm:exist}
    There exists an optimal solution $U, V, \{C^{(t)}\}_{t=1}^T$ of Problem~\ref{eq:main-problem} satisfying that $\mathrm{Col}(U) \subseteq  \mathrm{Col}(U_B)$ and $\mathrm{Col}(V) \subseteq  \mathrm{Col}(V_A)$.
\end{theorem}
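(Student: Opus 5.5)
First we would establish that Problem~\ref{eq:main-problem} has an optimal solution at all. The feasible $(U,V)$ lie on compact Stiefel manifolds. For fixed $(U,V)$ the optimal coordinates are $C^{(t)}=U^\top \Delta W^{(t)}_{\mathrm{target}} V$. Substituting this reduces the objective to
\[
\textstyle\sum_t \|\Delta W^{(t)}_{\mathrm{target}}\|_F^2-\sum_t\|U^\top \Delta W^{(t)}_{\mathrm{target}}V\|_F^2 ,
\]
which is continuous on a compact set, so a minimizer $(U^\star,V^\star)$ exists. Writing $P=U_BU_B^\top$ and $Q=V_AV_A^\top$, Theorem~\ref{thm:lossless-projection} together with the scaling in Eq.~\ref{eq:get-target} gives $\Delta W^{(t)}_{\mathrm{target}}=P\,\Delta W^{(t)}_{\mathrm{target}}\,Q$ for all $t$. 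The task is therefore to show that $\sum_t\|U^\top P X_t Q V\|_F^2$, with $X_t=\Delta W^{(t)}_{\mathrm{target}}$, can be maximized by a $U$ whose columns lie in $\mathrm{Col}(U_B)$, and likewise for $V$.

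The core step is a projection-and-reorthonormalize argument, carried out one factor at a time. Fix the optimal $V^\star$ and let $S=\sum_t X_t V^\star V^{\star\top} X_t^\top$, so the $U$-part of the objective is $\mathrm{tr}(U^\top S U)$. Because $S=PSP$, the range of this PSD matrix lies in $\mathrm{Col}(U_B)$. Its maximum over orthonormal $U$ is the sum of its top $r$ eigenvalues (Ky Fan), and this maximum is attained by top eigenvectors. Eigenvectors with nonzero eigenvalue lie in $\mathrm{Col}(U_B)$. If fewer than $r$ eigenvalues are nonzero, we fill the remaining columns with orthonormal vectors from $\mathrm{Col}(U_B)$, which contribute zero exactly as any other null vectors would. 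This requires $\dim\mathrm{Col}(U_B)\ge r$; that holds since $Tr_{\mathrm{in}}\ge r$ in the setting, replaced by the actual rank if deficient, as the paper notes.

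The resulting $\tilde U$ achieves an objective value at least that of $U^\star$, and $\mathrm{Col}(\tilde U)\subseteq\mathrm{Col}(U_B)$. Repeating the argument for $V$ with $\tilde U$ fixed, using $T'=\sum_t X_t^\top\tilde U\tilde U^\top X_t=QT'Q$, yields $\tilde V$ with columns in $\mathrm{Col}(V_A)$ without decreasing the objective. Setting $C^{(t)}=\tilde U^\top X_t\tilde V$ then gives an optimal triple with the required column containments.

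The main obstacle is the degenerate case: when $S$ has rank below $r$, or $\dim\mathrm{Col}(U_B)<r$, the maximizer must be padded carefully. If $\mathrm{Col}(U_B)$ itself has dimension below $r$, strict containment is impossible for an $n\times r$ orthonormal $U$, so we would state the assumption $\mathrm{rank}(B_{\mathrm{cat}}),\mathrm{rank}(A_{\mathrm{cat}})\ge r$ explicitly. The rest of the argument is routine trace algebra.
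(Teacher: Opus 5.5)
Your argument is correct, but it takes a different route from the paper's.

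\textbf{The paper's route.} It never eliminates $C^{(t)}$ and never uses spectral theory. It starts from an arbitrary optimal triple and forms $F^{(t)\star}=U^\star C^{(t)\star}V^{\star\top}$. It then uses the Frobenius-orthogonal projector $\mathcal{P}(X)=U_BU_B^\top XV_AV_A^\top$, whose image contains every target (your $X_t$), to split
$\|X_t-F^{(t)\star}\|_F^2=\|X_t-\mathcal{P}(F^{(t)\star})\|_F^2+\|(I-\mathcal{P})(F^{(t)\star})\|_F^2$.
Optimality forces $(I-\mathcal{P})(F^{(t)\star})=0$. The paper then re-factorizes: it picks $r$-dimensional subspaces of $\mathrm{Col}(U_B)$ and $\mathrm{Col}(V_A)$ containing the joint column and row spans of the $F^{(t)\star}$, and sets $C^{(t)}=U^\top F^{(t)\star}V$.

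\textbf{Your route.} You eliminate $C^{(t)}$, reduce to the two-block trace maximization of $\sum_t\|U^\top X_tV\|_F^2$, and take one Ky Fan block-coordinate step per factor, using $S=PSP$ and $T'=QT'Q$.

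\textbf{What your route buys.} First, it actually proves that a minimizer exists; the paper silently starts from ``an arbitrary optimal solution.'' Second, every intermediate tuple is feasible by construction. The paper's claim that replacing $F^{(t)\star}$ by $\mathcal{P}(F^{(t)\star})$ cannot increase the objective tacitly needs these projections to admit a common orthonormal rank-$r$ factorization. That is true, since $U_BU_B^\top U^\star$ and $V_AV_A^\top V^\star$ span at most $r$ dimensions, but it is never stated. Third, each of your steps is exactly the HOOI update, so it also shows those updates can be kept inside the core space.

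\textbf{What the paper's route buys.} Using only elementary projection geometry, it reaches a stronger conclusion: \emph{every} optimal solution has all its reconstructions $UC^{(t)}V^\top$ inside $\{U_BYV_A^\top\}$.

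\textbf{Shared assumption.} Both proofs need $\dim\mathrm{Col}(U_B)\ge r$ and $\dim\mathrm{Col}(V_A)\ge r$; the paper uses this implicitly when choosing $\tilde{\mathcal U}$. With the thin SVD exactly as in Eq.~\ref{eq:core-space}, $U_B$ and $V_A$ have $Tr_{\mathrm{in}}$ orthonormal columns even when $B_{\mathrm{cat}}$ or $A_{\mathrm{cat}}$ is rank-deficient. The condition therefore reduces to $r\le Tr_{\mathrm{in}}$, and it only bites in the actual-rank variant you flagged.
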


Consequently, we can optimize Problem~\ref{eq:main-problem} in the \emph{core space} induced by $(U_B,V_A)$.
The original problem is equivalent to the following reduced one:
\begin{equation}
\label{eq:core-problem}
\min_{\substack{
U_{\mathrm{core}}^\top U_{\mathrm{core}}=I_r,\\
V_{\mathrm{core}}^\top V_{\mathrm{core}}=I_r,\\
\{C_{\mathrm{core}}^{(t)}\}_{t=1}^T
}}
\sum_{t=1}^{T}
\left\|
M^{(t)}_{\mathrm{target}} - U_{\mathrm{core}} C_{\mathrm{core}}^{(t)} V_{\mathrm{core}}^\top
\right\|_F^2,
\end{equation}
where
$U_{\mathrm{core}}\in\mathbb{R}^{(Tr_{\mathrm{in}})\times r}$,
$V_{\mathrm{core}}\in\mathbb{R}^{(Tr_{\mathrm{in}})\times r}$, and
$C_{\mathrm{core}}^{(t)}\in\mathbb{R}^{r\times r}$.
After solving Problem~\ref{eq:core-problem}, the solution is lifted back to the original parameter space via:
\begin{equation}
\label{eq:from-core-to-main}
U = U_B U_{\mathrm{core}},\qquad V = V_A V_{\mathrm{core}}.
\end{equation}
Theorem~\ref{thm:equivalent} below formalizes this equivalence (proof in Appendix~\ref{app:proof-equivalent}). Therefore, we can solve the subspace learning problem in the much smaller core space and then recover a solution in the full parameter space.

\begin{theorem}[Equivalence of optimal solutions]
\label{thm:equivalent}
Let $U, V, \{C^{(t)}\}_{t=1}^T$ be an optimal solution of
Problem~\ref{eq:main-problem} satisfying Theorem~\ref{thm:exist}, and define
$\tilde U_\mathrm{core} := U_B^\top U$, $\tilde V_\mathrm{core} := V_A^\top V$ and
$\tilde C_\mathrm{core}^{(t)} := C^{(t)}$.
Then
$(\tilde U_\mathrm{core}, \tilde V_\mathrm{core}, \{\tilde C_\mathrm{core}^{(t)}\}_{t=1}^T)$
is an optimal solution of Problem~\ref{eq:core-problem}.
Conversely, for any optimal solution
$(U_\mathrm{core}, V_\mathrm{core}, \{C^{(t)}_\mathrm{core}\}_{t=1}^T)$
of Problem~\ref{eq:core-problem}, defining
$\tilde U := U_B U_\mathrm{core}$, $\tilde V := V_A V_\mathrm{core}$ and
$\tilde C^{(t)} := C^{(t)}_\mathrm{core}$ yields an optimal solution of
Problem~\ref{eq:main-problem}.
\end{theorem}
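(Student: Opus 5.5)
The plan is to show that the objective values of Problem~\ref{eq:main-problem} and Problem~\ref{eq:core-problem} agree under the two maps $U\mapsto U_B^\top U$ and $U_{\mathrm{core}}\mapsto U_B U_{\mathrm{core}}$ (and likewise for $V$ with $V_A$), at least on the relevant feasible sets. Combined with Theorem~\ref{thm:exist}, this equality of objectives gives equality of optimal values. Optimality then transfers in both directions.

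\textbf{Step 1 (feasibility).} For $U$ with $\mathrm{Col}(U)\subseteq\mathrm{Col}(U_B)$ we have $U=U_BU_B^\top U$. Therefore
\[
\tilde U_{\mathrm{core}}^\top\tilde U_{\mathrm{core}}=U^\top U_BU_B^\top U=U^\top U=I_r,
\]
and the same holds for $V$. In the converse direction, $\tilde U^\top\tilde U=U_{\mathrm{core}}^\top U_B^\top U_B U_{\mathrm{core}}=I_r$, because $U_B^\top U_B=I$.

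\textbf{Step 2 (objective identity).} By Theorem~\ref{thm:lossless-projection}, and since the surrogate targets are scalar multiples of the updates, we have
\[
\Delta W^{(t)}_{\mathrm{target}}=U_B M^{(t)}_{\mathrm{target}} V_A^\top, \qquad M^{(t)}_{\mathrm{target}}=U_B^\top\Delta W^{(t)}_{\mathrm{target}}V_A.
\]
The target scaling coincides in both spaces because $\|M^{(t)}\|_F=\|\Delta W^{(t)}\|_F$ by orthonormal invariance. For any feasible core triple, we get
\[
\Delta W^{(t)}_{\mathrm{target}}-U_BU_{\mathrm{core}}C^{(t)}V_{\mathrm{core}}^\top V_A^\top=U_B\big(M^{(t)}_{\mathrm{target}}-U_{\mathrm{core}}C^{(t)}V_{\mathrm{core}}^\top\big)V_A^\top .
\]
Left multiplication by $U_B$ and right multiplication by $V_A^\top$ preserve the Frobenius norm, since both matrices have orthonormal columns. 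Hence the full objective at the lifted point equals the core objective. Running the same identity backwards, for $U,V$ confined to the core column spaces we have $UCV^\top=U_B(U_B^\top U)C(V_A^\top V)^\top V_A^\top$, so the full objective equals the core objective at the projected point.

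\textbf{Step 3 (optimality transfer).} Let $p^*$ be the optimum of Problem~\ref{eq:main-problem} and $q^*$ the optimum of Problem~\ref{eq:core-problem}. Lifting shows that every core objective value is attained by some full feasible point, so $p^*\le q^*$. Conversely, take an optimal full solution confined to the core space, which exists by Theorem~\ref{thm:exist}. Its projection is core-feasible with the same value, so $q^*\le p^*$. Thus $p^*=q^*$.

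For the forward claim, the projected triple of the given confined optimal solution has value $p^*=q^*$, so it is optimal for Problem~\ref{eq:core-problem}. For the converse claim, lifting a core optimum gives value $q^*=p^*$, so the lifted triple is optimal for Problem~\ref{eq:main-problem}.

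The main point needing care is the forward direction. The projection preserves the objective only because the hypothesis $\mathrm{Col}(U)\subseteq\mathrm{Col}(U_B)$ ensures $U_BU_B^\top U=U$. Without this confinement, the projected $U_B^\top U$ need not even have orthonormal columns. This is exactly why the statement requires the solution to satisfy Theorem~\ref{thm:exist}.
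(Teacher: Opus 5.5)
Your proof is correct and follows essentially the same route as the paper. You establish feasibility and objective preservation under $U\mapsto U_B^\top U$ and $U_{\mathrm{core}}\mapsto U_BU_{\mathrm{core}}$ (the paper packages these as mutually inverse, loss-preserving bijections in Lemma~\ref{lemma:reparam}), and then invoke Theorem~\ref{thm:exist} to equate the optimal values. Two small differences are improvements: your note that $\|M^{(t)}\|_F=\|\Delta W^{(t)}\|_F$ (via Theorem~\ref{thm:lossless-projection}) justifies the identity $\Delta W^{(t)}_{\mathrm{target}}=U_BM^{(t)}_{\mathrm{target}}V_A^\top$ more explicitly than the paper does, and your two-inequality argument for $p^*=q^*$ does not need the maps to be inverses.
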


Problem~\ref{eq:core-problem} can be solved using the same Tucker procedure described in Section~\ref{sec:tucker}, but with the dimensions reduced from $n\times m$ to $(Tr_{\mathrm{in}})\times(Tr_{\mathrm{in}})$. Algorithm~\ref{alg:shared-subspace-core} summarizes the model merging procedure with the core-space acceleration.

\myparagraph{Computational complexity.}
We analyze time complexity under thin SVD and assume $(T\cdot r) \ll n$ and $ n=m, r_{\mathrm{in}}=r$ for simplicity. 
Solving Problem~\ref{eq:main-problem} in the full space via HOSVD+HOOI costs
$\mathcal{O}(n^3T + n^2TrK)$, where $K$ denotes  the number of HOOI iterations.
With the core-space acceleration, constructing $(U_B,V_A)$ and computing $T$ core matrices by Eq.~\ref{eq:compute_mt} costs
$\mathcal{O}(n(Tr)^2)$, and running the same Tucker procedure in the $(Tr)\times(Tr)$ core space costs
$\mathcal{O}(T^4r^3 + T^3r^3K)$.
Comparing the dominant terms, the core-space formulation thus enjoys approximate speedups:
\[
\text{One-off stage: }\frac{n^2}{T\cdot r^2}, \quad
\text{Iteration stage: }\frac{n^2}{(Tr)^2}.
\]
Detailed derivations are provided in Appendix~\ref{app:sec-time-complexity}.

\section{Experiments}\label{sec:exp}

\subsection{Setup}

\myparagraph{Basic setup.}\label{sec:basic-setup}
We evaluate merging methods on both \textit{vision} and \textit{language} tasks, following the experimental protocols of KnOTS~\citep{knots} and CoreSpace~\citep{core-space}. For reproducibility, we use the publicly released LoRA checkpoints from KnOTS in the main experiments. 

For vision tasks, a CLIP ViT-B/32 model~\citep{clip,vit} is finetuned on eight vision datasets. For language tasks, a LLaMA3-8B~\citep{dubey2024llama} is finetuned on six NLI (neural language inference) tasks. All LoRA adapters are applied to attention layers (query, key, value, and output projections) with rank 16.

\begin{table*}[t!]
\centering
\caption{Normalized accuracies on 6 NLI tasks with LLaMA3-8B.}
\label{table:llama}
\small
\begin{tabular}{ll cccccc A c}

\toprule
Method & Space & SNLI & MNLI & SICK & QNLI & RTE & SCITAIL  & Avg & $\mathrm{Avg}_\mathrm{full}$ \\
\midrule
\multicolumn{2}{l}{\textit{LoRA's~absolute~accuracy}}  & \textit{92.50} & \textit{90.31} & \textit{91.58} & \textit{94.49} & \textit{89.86} & \textit{96.52} & -- & --
\\
\midrule
LoRA-LEGO        & -- & 87.25 & 92.84 & 87.47 & 58.51 & 99.19 & 95.81 & 86.84 & -- \\
RobustMerge & -- & 89.14 & 92.13 & 89.56 & 60.98 & 100.00 & 96.20 & 88.00 & -- \\
Iso-CTS & -- & 90.91 & 87.83 & 85.02 & 68.76 & 100.81 & 95.96 & 88.21 & -- \\
\midrule
\multirow{4}{*}{Iso-C}
& Full      & 90.91 & 90.06 & 83.51 & 67.60 & 100.81 & 96.64 & 88.25 & 91.38 \\
& KnOTS     & 86.75 & 90.07 & 89.12 & 56.48 & 100.00 & 95.57 & 86.33 & 91.99 \\
& CoreSpace & 90.90 & 90.03 & 83.49 & 67.57 & 100.81 & 96.69 & 88.25 & 90.38 \\
\arrayrulecolor{lightgray} 
\cmidrule[0.25pt]{2-10} 
\arrayrulecolor{black}
& CtM (Ours) & 90.73 & 89.72 & 88.07 & 67.67 & 100.81 & 96.00 & 88.83 & -- \\
\midrule
\multirow{4}{*}{TIES}
& Full      & 94.78 & 96.16 & 83.42 & 74.04 & 99.19 & 96.69 & 90.71 & 90.73 \\
& KnOTS     & 91.66 & 95.05 & 90.34 & 80.43 & 101.61 & 97.32 & 92.74 & 92.90 \\
& CoreSpace & 91.65 & 93.15 & 93.46 & 83.46 & 99.19 & 97.71 & 93.10 & 93.40 \\
\arrayrulecolor{lightgray} 
\cmidrule[0.25pt]{2-10} 
\arrayrulecolor{black}
& CtM (Ours) & 91.08 & 90.26 & 96.11 & 89.71 & 100.81 & 96.93 & 94.15 & -- \\
\midrule
\multirow{4}{*}{DARE-TIES}
& Full      & 94.76 & 96.18 & 83.44 & 74.11 & 99.19 & 96.64 & 90.72 & 90.36 \\
& KnOTS     & 92.10 & 95.50 & 91.30 & 81.71 & 100.81 & 97.22 & 93.11  & 93.50 \\
& CoreSpace & 91.90 & 91.82 & 95.39 & 80.79 & 100.00 & 97.37 & 92.88 & 93.12   \\
\arrayrulecolor{lightgray} 
\cmidrule[0.25pt]{2-10} 
\arrayrulecolor{black}
& CtM (Ours) & 92.96 & 94.12 & 97.44 & 94.24 & 97.58 & 97.42 & 95.63 & -- \\
\bottomrule
\end{tabular}
\end{table*}

\myparagraph{Baselines.}
We compare our method against both \textit{general-purpose} merging methods under the Merge-then-Compress pipeline and \textit{low-rank-aware} merging methods that directly produce low-rank updates.

For the \textit{general-purpose} methods (\textbf{TIES}~\citep{ties}, \textbf{DARE}~\citep{dare}, and \textbf{Iso-C}~\citep{iso}), we evaluate merging in three coordinate spaces: \textbf{Full space}, (the $\Delta W$ space), \textbf{KnOTS space} and \textbf{CoreSpace} (see Appendix~\ref{app:sec-difference-to-knots-corespace} for detailed description of KnOTS and CoreSpace).
After obtaining a merged update in the chosen coordinate space, we map it back to the full space and apply truncated SVD to obtain a low-rank result, following MtC. 

The \textit{low-rank-aware} methods (\textbf{LoRA-LEGO}~\citep{lego}, \textbf{RobustMerge}~\citep{robustmerge} and \textbf{Iso-CTS}~\citep{iso}) could output a low-rank update directly, thus truncated SVD is not required. For these methods, merging is performed in the full space.

\myparagraph{Evaluation.}
Following the widely used validation holdout strategy~\citep{knots, core-space, ties, iso}, we tune hyperparameters on a validation holdout and report test performance. 
We report \emph{normalized accuracy} (each task's test accuracy normalized by its corresponding single-task LoRA). For MtC baselines, we also report the pre-truncation performance, denoted as $\mathrm{Avg}_\mathrm{full}$. $\mathrm{Avg}$ and $\mathrm{Avg}_\mathrm{full}$ are tuned separately on validation to reflect their respective tuned optimum.
Additional details are provided in Appendix~\ref{app:additional-experimental-details}.

\subsection{Main Results}

We report results on two backbones (CLIP ViT-B/32 for multi-task vision and LLaMA3-8B for multi-task NLI) in Table~\ref{table:vit} and Table~\ref{table:llama}. Across both settings, we organize our findings as the following observations.

\paragraph{Post-hoc compression often incurs substantial performance degradation.}
For MtC baselines, the pre-truncation merged update can be strong (reported as $\mathrm{Avg}_\mathrm{full}$), but compressing it into rank-$r$ via truncated SVD often leads to a notable degradation ($\mathrm{Avg}$). This truncation gap is most severe in vision tasks.  For example, with CoreSpace, performance drops are significant: TIES drops from $67.73$ to $65.33$, DARE-TIES drops from $69.55$ to $65.91$, and Iso-C even drops from $75.87$ to $65.71$. 
These results demonstrate that a high-quality merged update pre-compression does not guarantee strong performance post-compression. This observation directly motivates CtM, which enforces the rank constraint before merging.
For NLI, the truncation gap, while smaller, persists. This indicates that post-compression degradation remains a systematic concern when a fixed rank constraint is required.

\paragraph{CtM consistently outperforms single-LoRA-output baselines.}
Across all general-purpose merging rules evaluated (Iso-C, TIES, and DARE-TIES), CtM consistently achieves better performance than MtC.
Results with more base rules are provided in Appendix~\ref{app:more-basers}.
In particular, CtM demonstrates especially strong synergy with TIES-based methods.
We hypothesize that this is because, in the compact $r\times r$ coordinate space, task coordinates exhibit lower dispersion, making cross-task conflicts more concentrated and hence easier to resolve.
This suggests the potential for developing merging methods better suited to the CtM framework.
Overall, on both vision and language tasks,  CtM with TIES consistently outperforms all baselines that output a single LoRA, including MtC and low-rank-aware baselines.

\paragraph{CtM narrows the gap to full-weight merge methods significantly.}
On CLIP ViT, CtM with TIES reaches $69.75$, which surpasses the best pre-compression performance of TIES and DARE-TIES.
Meanwhile, the strongest full-parameter baseline, Iso-C in the full space, achieves $\mathrm{Avg}_\mathrm{full}=75.91$, indicating that full-parameter merging still offers an upper bound without rank constraint, but CtM closes the gap while retaining LoRA's efficiency. 
On LLaMA NLI, CtM with DARE-TIES attains $95.63$, exceeding the corresponding $\mathrm{Avg}_\mathrm{full}$ values of all MtC baselines. 
We speculate that this is because the joint reconstruction objective filters out task-specific noise from individual LoRAs, thereby uncovering an effective shared subspace.

\subsection{Analysis}
\begin{figure}[ht]
  \centering
  \begin{minipage}[c]{0.23\textwidth}
    \centering
    \includegraphics[width=\linewidth]{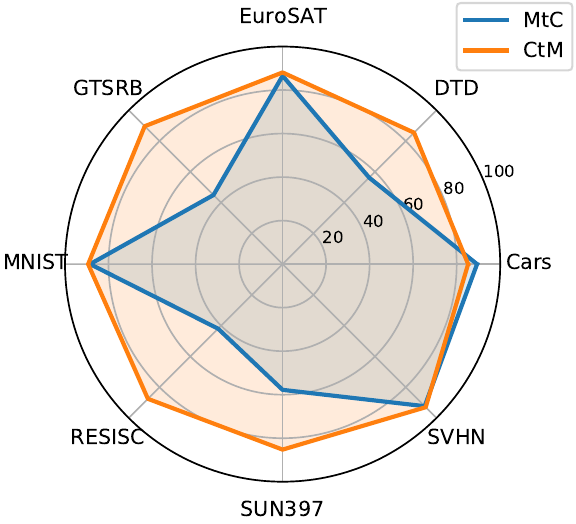}
  \end{minipage}
  \begin{minipage}[c]{0.23\textwidth}
    \centering
    \includegraphics[width=\linewidth]{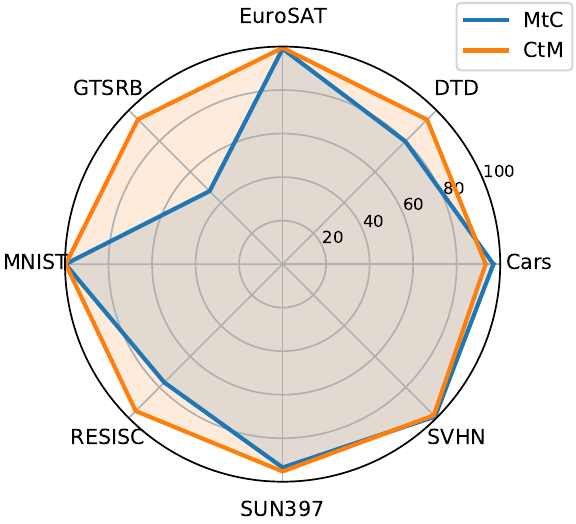}
  \end{minipage}\\
  \begin{minipage}[c]{0.23\textwidth}
    \centering
    \renewcommand{\arraystretch}{1.1}
    \setlength{\tabcolsep}{4pt}
    \small
    \begin{tabular}{ccc}
      \toprule
       & MtC & CtM \\
      \midrule
      Mean    & 69.70    & 87.94    \\
      Worst   & 41.94    & 85.24    \\
      Std     & 21.52    & 2.71    \\
      \bottomrule
    \end{tabular}
  \end{minipage}
  \begin{minipage}[c]{0.23\textwidth}
    \centering
    \renewcommand{\arraystretch}{1.1}
    \setlength{\tabcolsep}{4pt}
    \small
    \begin{tabular}{ccc}
      \toprule
       & MtC & CtM \\
      \midrule
      Mean    & 86.54    & 96.18    \\
      Worst   & 47.45    & 93.32    \\
      Std     & 18.16    & 2.57    \\
      \bottomrule
    \end{tabular}
  \end{minipage}
  \caption{Subspace comparison between CtM and MtC on CLIP ViT.
\textbf{Left}: energy retention.
\textbf{Right}: function (accuracy) retention.}
  \label{fig:energy}
\end{figure}

\myparagraph{CtM learns a more effective and balanced subspace from both energy and functional perspectives.}
\label{sec:subspace-coverage}
To compare the fidelity of the subspaces obtained by different merging pipelines, we extract the left/right subspaces of the merged result $\Delta W_{\mathrm{LoRA}}$ via thin SVD, and project each task LoRA onto these subspaces to obtain its projected LoRA. We then evaluate the projected LoRAs from two complementary perspectives: how much parameter-space energy they retain, and how much task utility they preserve at the functional level (details in Appendix~\ref{app:sec-subspace-cover}). Figure~\ref{fig:energy} shows that CtM achieves higher average retention from both perspectives and exhibits substantially lower variance across datasets. This indicates that CtM does not merely improve the mean fidelity of the retained subspace, but also allocates the fixed rank budget more evenly across tasks. In particular, the functional-retention results suggest that CtM is more robust to worst-case task collapse, while MtC's post-hoc truncation may severely discard task-relevant directions for smaller or less dominant adapters. By selecting the shared subspace before merging, CtM mitigates this imbalance and preserves task utility more faithfully under the fixed-rank constraint.

\myparagraph{CtM with SVD-based subspaces.}
We further replace the learned shared subspaces in CtM with simple SVD-based ones, obtained by stacking the normalized adapters and taking the top-$r$ left/right singular vectors. As shown in Table~\ref{tab:ctm_svd_subspace}, this simple variant still outperforms the best MtC baseline, while learned subspaces further improve the performance. These results suggest that CtM does not rely solely on the specific learned-subspace optimization, although better subspace learning provides additional gains.

\begin{table}[h]
\centering
\caption{CtM with simple SVD-based subspaces still outperforms the best MtC baseline, while learned subspaces further improve performance.}
\label{tab:ctm_svd_subspace}
\small
\begin{tabular}{lcc}
\toprule
Method & TIES & DARE-TIES \\
\midrule
Best MtC                & 65.33 & 65.91 \\
CtM + SVD subspace      & 67.42 & 69.04 \\
CtM + learned subspace  & \textbf{69.75} & \textbf{70.68} \\
\bottomrule
\end{tabular}
\end{table}

\myparagraph{Effect of the reweighting hyperparameter $\beta$.}
Recall that $\beta$ controls the task-importance weights in Eq.~\ref{eq:get-wt}, interpolating between scale-equalized targets ($\beta{=}0$) and scale-preserving targets ($\beta{=}1$).
As shown in Figure~\ref{fig:ablation-beta}, CtM is relatively insensitive to $\beta$ within a reasonable range: a coarse sweep over $\{0, 0.25, 0.5, 0.75\}$ is sufficient in practice.
In contrast, setting $\beta{=}1$ (i.e., no re-normalization) consistently degrades performance, highlighting the necessity of reweighting when learning the shared subspace.
We further evaluate $\beta$-reweighting as a plug-in for MtC baselines (Appendix~\ref{app:apply-beta-to-baseline}), where it yields limited improvements, and the best reweighted baseline remains substantially below CtM.
Overall, these results suggest that CtM's advantage mainly comes from learning shared low-dimensional subspaces \emph{before} merging, rather than from tuning $\beta$.

\begin{figure}[ht]
  \centering
  \includegraphics[width=0.4\textwidth]{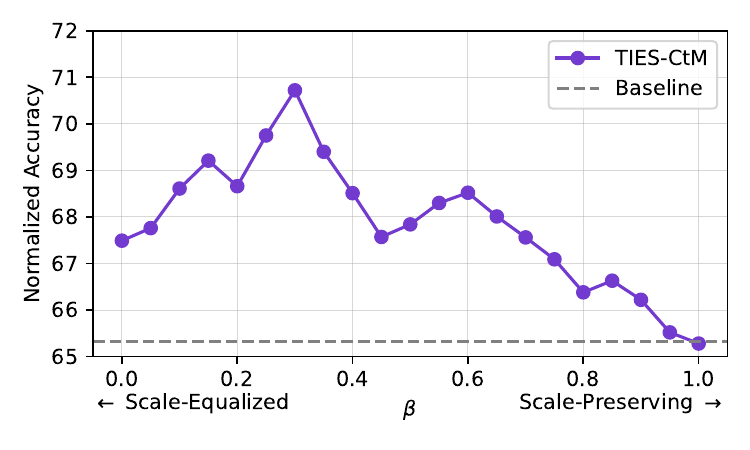}
  \caption{Effect of the reweighting hyperparameter $\beta$ of CtM on CLIP ViT. CtM is relatively insensitive to $\beta$ within a reasonable range, so in practice a coarse sweep suffices. Baseline method: TIES in CoreSpace.}
  \label{fig:ablation-beta}
\end{figure}

\myparagraph{Results of different target ranks.}
Figure~\ref{fig:different-ranks} varies the target rank of the merged adapter while keeping the same eight rank-16 LoRAs as inputs.
As expected, the performance of all methods generally improves as the target rank increases, reflecting the benefit of a larger rank budget.
CtM achieves the best performance across all ranks, with its advantage over MtC baselines widening at higher ranks.
In particular, the MtC variants begin to saturate when rank increases, whereas CtM continues to benefit from additional capacity up to rank 64.
These trends suggest that CtM makes more effective use of the available low-rank capacity.

\begin{figure}[ht]
  \centering \includegraphics[width=0.42\textwidth]{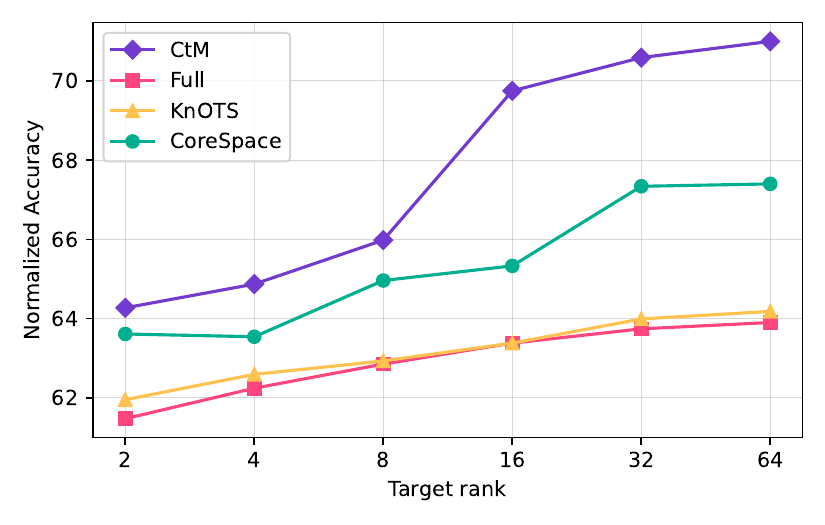}
  \caption{Average normalized accuracy under  different target ranks on CLIP ViT. The base merging method is TIES.}
  \label{fig:different-ranks}
\end{figure}

\subsection{Additional Experiments}
\myparagraph{Core-space acceleration with preserved solution quality.}
Table~\ref{table:time} compares the efficiency of computing the subspaces of CtM in the full parameter space versus in the lossless core space. Optimizing in the core space is substantially faster, as HOSVD/HOOI operate on $(Tr_{\mathrm{in}})\times(Tr_{\mathrm{in}})$ coordinates rather than $n\times m$ matrices, while final merged performance (Avg. Norm. Acc.) is preserved. Moreover, in two distance metrics applied, the solution obtained in the core space remains close to that from the full-space solution.
\begin{table}[ht]
\centering
\caption{Core-space acceleration for CtM subspace learning on LLaMA3-8B. Besides runtime and average normalized accuracy, we report two distances to the full-space solution: a basis-invariant subspace distance (Chordal) and an element-wise distance.}
\label{table:time}
\begin{tabular}{ccc}
\toprule
 & Full & Core \\
\midrule
Runtime (s) &  690.83 & 20.77 \\
Avg. Normalized Acc & 94.15 & 94.15 \\
Chordal Distance to Full & -- & 0.031 \\
Element Distance to Full & -- & 0.000329 \\
\bottomrule
\end{tabular}
\end{table}


\myparagraph{Extensions to heterogeneous settings and more LoRAs.}
To test the generalization of CtM, we evaluate it in two \emph{separate} settings:
(i) heterogeneous setting, including heterogeneous ranks or injected modules (Appendix~\ref{app:sec-heterogeneous-settings}) and
(ii) a larger collection of LoRAs (Appendix~\ref{app:sec-larger-T}). Results in Table~\ref{table:vit-heterogeneous-ranks}/\ref{table:vit-heterogeneous-modules} and Table~\ref{table:more-loras} indicate that in both scenarios, CtM consistently surpasses the compared single-LoRA-output baselines. This demonstrates its robustness to rank heterogeneity and to an increased number of adapters.

\myparagraph{Results on generative tasks.}
We further evaluate CtM beyond the classification-oriented settings considered above. In Appendix~\ref{app:sec-nlg}, we include additional results on natural language generation with LLaMA2-7B, and in Appendix~\ref{app:sec-mm-mergebench}, we report results on MM-MergeBench, an 8-task multimodal generative benchmark based on LLaVA-1.5-7B. Across these generative settings, CtM consistently improves over the compared baselines, showing that the advantage of CtM extends beyond the classification-oriented settings in the main paper.

\section{Conclusion}
We study the merging of $T$ LoRA adapters into a single LoRA under a rank constraint $r$.
We first show that existing MtC pipelines neglect the rank constraint during merging, which can misallocate the limited rank budget and hurt overall multi-task utility. 
To address this, we propose Compress-then-Merge (CtM), a reversed pipeline that enforces the rank constraint \emph{before} merging.
Specifically, CtM first computes shared $r$-dimensional subspaces using only the LoRA weights, projects each adapter into compact $r{\times}r$ coordinates, and then merges the coordinates directly.
Across multi-task vision and language tasks, CtM consistently improves over prior single-LoRA-output baselines and narrows the gap to full-weight methods significantly.
Finally, CtM also enables an efficient computation combined with the core-space framework, yielding substantial speedups without sacrificing merged performance.
In summary, CtM provides an effective strategy for merging multiple LoRAs into one LoRA under a strict rank constraint.

\section*{Limitations}
CtM is designed for the setting of consolidating multiple adapters into a single fixed-rank LoRA, and its effectiveness depends on the existence of shared structure across task updates that can be captured by a common low-rank subspace. When tasks are highly orthogonal, any method constrained to output a single fixed-rank LoRA will inevitably suffer from information loss, and CtM cannot eliminate this fundamental bottleneck. In such cases, a higher rank budget or a more expressive PEFT parameterization for higher rank~\citep{rajeravan, HiRA, abba} may be required. Moreover, CtM still requires selecting hyperparameters in practice, such as the target rank, the reweighting coefficient, and the base merge rule. Developing more adaptive strategies and merge rules tailored to the CtM coordinate space remains an important direction for future work.

\section*{Acknowledgment}
This work was supported by the AI for Science Program, Shanghai Municipal Commission of Economy and Informatization (No. 2025-GZL-RGZN-BTBX-02026) and the National Natural Science Foundation of China (No. 62376155).

\section*{Impact Statement}
CtM is intended to improve the efficiency of merging and deploying multiple LoRA adapters, which may be valuable in application scenarios constrained by storage, serving, and management costs. However, in practical use, such methods still involve additional hyperparameter selection and computational overhead, leaving room for improvement in deployment convenience. In addition, merging the capabilities of multiple adapters into a single model can make it more difficult to trace model behavior back to specific data sources or original adapter providers, thereby raising challenges for copyright attribution and model safety auditing. Such methods may also be used to combine multiple specialized adapters with potentially risky capabilities. Therefore, practical deployment should be accompanied by appropriate source management and safety governance measures.

\bibliography{main}
\bibliographystyle{icml2026}

\newpage
\appendix
\onecolumn
\section*{Appendix}
\section{Proofs}
In this section, we provide the detailed proofs of Theorem~\ref{thm:lossless-projection}-\ref{thm:equivalent}.

\subsection{Proof of Theorem~\ref{thm:lossless-projection}.}\label{app:proof-lossless}

By construction,
\[
B_{\mathrm{cat}} = \big[ B^{(1)}, B^{(2)}, \ldots, B^{(T)} \big]
= U_B \Sigma_B V_B^\top,
\]
so $\mathrm{Col}(B^{(t)}) \subseteq \mathrm{Col}(B_{\mathrm{cat}}) \subseteq \mathrm{Col}(U_B)$ for every $t$. Since $U_B U_B^\top$ is the orthogonal projector onto $\mathrm{Col}(U_B)$, we have
\[
U_B U_B^\top B^{(t)} = B^{(t)} \quad\Rightarrow\quad
U_B U_B^\top \Delta W^{(t)} = U_B U_B^\top B^{(t)} A^{(t)} = \Delta W^{(t)}.
\]

Similarly,
\[
A_{\mathrm{cat}} =
\begin{bmatrix}
A^{(1)}\\
A^{(2)}\\
\vdots\\
A^{(T)}
\end{bmatrix}
= U_A \Sigma_A V_A^\top
\]
implies $\mathrm{Row}(A^{(t)}) \subseteq \mathrm{Row}(A_{\mathrm{cat}}) \subseteq \mathrm{Col}(V_A)$, hence $A^{(t)} V_A V_A^\top = A^{(t)}$ and
\[
\Delta W^{(t)} V_A V_A^\top
= B^{(t)} A^{(t)} V_A V_A^\top
= B^{(t)} A^{(t)}
= \Delta W^{(t)}.
\]
\qed

\subsection{Proof of Theorem~\ref{thm:exist}.}\label{app:proof-exist}

\textbf{\textit{Proof sketch.}}~
We first show that, for any optimal solution of Problem~\ref{eq:main-problem},
the reconstructed matrices $F^{(t)} = U C^{(t)} V^\top$ all lie in the subspace
$\{ U_B Y V_A^\top : Y \in \mathbb{R}^{(Tr_{\mathrm{in}}) \times (Tr_{\mathrm{in}})} \}$ induced by $U_B$ and
$V_A$. Then we re-factorize the optimal $\{F^{(t)}\}_{t=1}^T$ within this
subspace to obtain a new optimal solution $(U, V, \{C^{(t)}\})$ such that
$\mathrm{Col}(U) \subseteq \mathrm{Col}(U_B)$ and
$\mathrm{Col}(V) \subseteq \mathrm{Col}(V_A)$, which proves the lemma.

\textbf{\textit{Detailed proof.}}~
Firstly, we define the following linear map using $U_B$ and $V_A$:
\[
  \mathcal{P} : \mathbb{R}^{n \times m} \to \mathbb{R}^{n \times m}, \quad
  \mathcal{P}(X) :=  U_B U_B^\top X V_A V_A^\top.
\]

According to Theorem~\ref{thm:lossless-projection},  for $t \in \{1,\dots,T\}$, the target matrices
$\Delta W^{(t)}_{\mathrm{target}}$ lie in the subspace
$\operatorname{Im}(\mathcal{P})$, i.e.,
\[
  \Delta W^{(t)}_{\mathrm{target}}
  = \mathcal{P}\big(\Delta W^{(t)}_{\mathrm{target}}\big)
  = U_B U_B^\top \Delta W^{(t)}_{\mathrm{target}} V_A V_A^\top.
\]

Let $(U^\star, V^\star, \{C^{(t)\star}\}_{t=1}^T)$ be an arbitrary optimal
solution of Problem~\ref{eq:main-problem}, and define
\[
  F^{(t)\star} := U^\star C^{(t)\star} V^{\star\top}
  \in \mathbb{R}^{n\times m}, \qquad t=1,\dots,T.
\]
Then we  decompose $F^{(t)\star}$ into its components parallel and
orthogonal to $\operatorname{Im}(\mathcal{P})$:
\[
  F^{(t)\star} = \mathcal{P}\left(F^{(t)\star}\right) + (I - \mathcal{P})\left(F^{(t)\star}\right)
  =: F^{(t)}_{\parallel} + F^{(t)}_{\perp},
\]
where $F^{(t)}_{\parallel} \in \operatorname{Im}(\mathcal{P})$ and
$F^{(t)}_{\perp} \in \operatorname{Im}(\mathcal{P})^\perp$.

Since
$\Delta W^{(t)}_{\mathrm{target}} \in \operatorname{Im}(\mathcal{P})$ and
$\mathcal{P}$ is an orthogonal projector with respect to the Frobenius inner
product, we have
\[
  \Delta W^{(t)}_{\mathrm{target}} - F^{(t)}_{\parallel}
  \in \operatorname{Im}(\mathcal{P}), \qquad
  F^{(t)}_{\perp} \in \operatorname{Im}(\mathcal{P})^\perp,
\]
and hence
\[
  \big\langle
    \Delta W^{(t)}_{\mathrm{target}} - F^{(t)}_{\parallel},
    F^{(t)}_{\perp}
  \big\rangle_F = 0.
\]
Therefore,
\begin{align*}
  \left\|
    \Delta W^{(t)}_{\mathrm{target}} - F^{(t)\star}
  \right\|_F^2
  &= \left\|
      \Delta W^{(t)}_{\mathrm{target}}
      - F^{(t)}_{\parallel} - F^{(t)}_{\perp}
    \right\|_F^2 \\
  &= \left\|
      \Delta W^{(t)}_{\mathrm{target}} - F^{(t)}_{\parallel}
    \right\|_F^2
     + \left\| F^{(t)}_{\perp} \right\|_F^2.
\end{align*}
Summing over $t=1,\dots,T$ yields
\[
  \sum_{t=1}^T
  \left\|
    \Delta W^{(t)}_{\mathrm{target}} - F^{(t)\star}
  \right\|_F^2
  \;=\;
  \sum_{t=1}^T
  \left\|
    \Delta W^{(t)}_{\mathrm{target}} - F^{(t)}_{\parallel}
  \right\|_F^2
  + \sum_{t=1}^T \left\| F^{(t)}_{\perp} \right\|_F^2.
\]

By definition,
\[
  \sum_{t=1}^T
  \left\|
    \Delta W^{(t)}_{\mathrm{target}} - F^{(t)\star}
  \right\|_F^2
\]
is the optimal objective value of Problem~\ref{eq:main-problem}. The above
expression shows that replacing $\{F^{(t)\star}\}$ by their projections
$\{F^{(t)}_{\parallel}\}$ cannot increase the objective value. Hence, optimality
of $(U^\star,V^\star,\{C^{(t)\star}\})$ implies
\[
  \sum_{t=1}^T \left\| F^{(t)}_{\perp} \right\|_F^2 = 0,
\]
and therefore $F^{(t)}_{\perp} = 0$ for all $t$. In particular,
\[
  F^{(t)\star} = \mathcal{P}(F^{(t)\star})
  = U_B U_B^\top F^{(t)\star} V_A V_A^\top
  \in \operatorname{Im}(\mathcal{P})
  \quad \text{for all } t.
\]

Thus, for each $t$ there exists a matrix
$Y^{(t)} \in \mathbb{R}^{(Tr_{\mathrm{in}})\times (Tr_{\mathrm{in}})}$ such that
\[
  F^{(t)\star} = U_B Y^{(t)} V_A^\top.
\]
Let
\[
  \mathcal{U} := \mathrm{span}\big\{\mathrm{Col}(F^{(t)\star}) : t=1,\dots,T\big\}
  \subseteq \mathrm{Col}(U_B),
\]
\[
  \mathcal{V} := \mathrm{span}\big\{\mathrm{Row}(F^{(t)\star}) : t=1,\dots,T\big\}
  \subseteq \mathrm{Col}(V_A).
\] 
Since each
$F^{(t)\star} = U^\star C^{(t)\star} V^{\star\top}$ has rank at most $r$ and
shares the common left and right factors $U^\star$ and $V^\star$, we have
$\dim(\mathcal{U}) \le r$ and $\dim(\mathcal{V}) \le r$.

Choose $r$-dimensional subspaces
$\tilde{\mathcal{U}}, \tilde{\mathcal{V}}$ such that
$\mathcal{U} \subseteq \tilde{\mathcal{U}} \subseteq \mathrm{Col}(U_B)$ and
$\mathcal{V} \subseteq \tilde{\mathcal{V}} \subseteq \mathrm{Col}(V_A)$. Let $U \in \mathbb{R}^{n\times r}$ and
$V \in \mathbb{R}^{m\times r}$ are semi-orthogonal matrices whose columns form bases of
$\tilde{\mathcal{U}}$ and $\tilde{\mathcal{V}}$, respectively. Then, for each
$t$, the matrix $F^{(t)\star}$ has its column space contained in
$\tilde{\mathcal{U}}$ and its row space contained in $\tilde{\mathcal{V}}$, so
there exists $C^{(t)} \in \mathbb{R}^{r\times r}$ such that
\[
  F^{(t)\star} = U C^{(t)} V^\top.
\]
In fact, these coefficients can be chosen
explicitly as
\[
  C^{(t)} = U^\top F^{(t)\star} V, \qquad t=1,\dots,T.
\]

Now we have a feasible solution for
Problem~\ref{eq:main-problem}:
$$
U,V,\{C^{(t)}\}_{t=1}^T.
$$
For each $t$,  we still have
$U C^{(t)} V^\top = F^{(t)\star}$. Hence the objective value achieved by
$(U,V,\{C^{(t)}\})$ is exactly the same as that of
$(U^\star,V^\star,\{C^{(t)\star}\})$, i.e., it is optimal. Moreover,
$\mathrm{Col}(U) = \tilde{\mathcal{U}} \subseteq \mathrm{Col}(U_B)$ and
$\mathrm{Col}(V) = \tilde{\mathcal{V}} \subseteq \mathrm{Col}(V_A)$ by
construction.

\qed

\subsection{Proof of Theorem~\ref{thm:equivalent}.}\label{app:proof-equivalent}

\begin{lemma}[Loss-preserving reparameterization]
\label{lemma:reparam}
Let $\mathcal{F}_\mathrm{res}$ be the feasible set of
Problem~\ref{eq:main-problem} restricted by Theorem~\ref{thm:exist},
and let $\mathcal{F}_\mathrm{core}$ be the feasible set of
Problem~\ref{eq:core-problem}.
Define the mappings
\[
  \Phi : \mathcal{F}_\mathrm{res} \to \mathcal{F}_\mathrm{core},\quad
  (U,V,\{C^{(t)}\}) \mapsto
  (\tilde U_\mathrm{core},\tilde V_\mathrm{core},\{\tilde C^{(t)}_\mathrm{core}\})
  := (U_B^\top U,\, V_A^\top V,\, \{C^{(t)}\}),
\]
\[
  \Psi : \mathcal{F}_\mathrm{core} \to \mathcal{F}_\mathrm{res},\quad
  (U_\mathrm{core},V_\mathrm{core},\{C^{(t)}_\mathrm{core}\}) \mapsto
  (\tilde U,\tilde V,\{\tilde C^{(t)}\})
  := (U_B U_\mathrm{core},\, V_A V_\mathrm{core},\, \{C^{(t)}_\mathrm{core}\}).
\]
Then $\Phi$ and $\Psi$ are well-defined inverses of each other,
and for all feasible tuples,
\[
  \mathcal{L}(U,V,\{C^{(t)}\})
  = \mathcal{L}_\mathrm{core}\big(\Phi(U,V,\{C^{(t)}\})\big),
  \qquad
  \mathcal{L}_\mathrm{core}(U_\mathrm{core},V_\mathrm{core},\{C^{(t)}_\mathrm{core}\})
  = \mathcal{L}\big(\Psi(U_\mathrm{core},V_\mathrm{core},\{C^{(t)}_\mathrm{core}\})\big).
\]
\end{lemma}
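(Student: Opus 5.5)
We need to prove Lemma~\ref{lemma:reparam}, and we do it in three steps: well-definedness of $\Phi$ and $\Psi$, that they are mutual inverses, and that the loss is preserved. Throughout, $\mathcal{F}_\mathrm{res}$ means the triples with $U^\top U=V^\top V=I_r$, $\mathrm{Col}(U)\subseteq\mathrm{Col}(U_B)$ and $\mathrm{Col}(V)\subseteq\mathrm{Col}(V_A)$. The two facts we use repeatedly are $U_B^\top U_B=I$ and $U_BU_B^\top U=U$ for such $U$, the latter because $U_BU_B^\top$ is the orthogonal projector onto $\mathrm{Col}(U_B)$. The analogous facts hold for $V_A$ and $V$.

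\textbf{Well-definedness.} For $\Phi$, we check $(U_B^\top U)^\top(U_B^\top U)=U^\top U_BU_B^\top U=U^\top U=I_r$, and similarly for $V$. So $\Phi$ lands in $\mathcal{F}_\mathrm{core}$, since the coefficient matrices are unconstrained. For $\Psi$, we have $(U_BU_\mathrm{core})^\top U_BU_\mathrm{core}=U_\mathrm{core}^\top U_\mathrm{core}=I_r$, and $\mathrm{Col}(U_BU_\mathrm{core})\subseteq\mathrm{Col}(U_B)$ holds trivially, with the same argument for $V$.

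\textbf{Inverses.} We compute $\Phi\circ\Psi$: $U_B^\top U_BU_\mathrm{core}=U_\mathrm{core}$. We compute $\Psi\circ\Phi$: $U_BU_B^\top U=U$, using the projector identity. The corresponding identities for $V$ follow in the same way, and the $C$'s are carried through unchanged by both maps.

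\textbf{Loss equality.} This is the only step with real content. Here $\mathcal{L}$ denotes the objective of Problem~\ref{eq:main-problem} and $\mathcal{L}_\mathrm{core}$ that of Problem~\ref{eq:core-problem}. Fix $t$ and write the residual $R^{(t)}=\Delta W^{(t)}_\mathrm{target}-UC^{(t)}V^\top$.

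\begin{itemize}
\item By Theorem~\ref{thm:lossless-projection}, and because the target is a scalar multiple of $\Delta W^{(t)}$, we have $\Delta W^{(t)}_\mathrm{target}=U_BU_B^\top\Delta W^{(t)}_\mathrm{target}V_AV_A^\top$.
\item By the projector identities, $UC^{(t)}V^\top=U_B(U_B^\top U)C^{(t)}(V^\top V_A)V_A^\top$.
\item Hence $R^{(t)}=U_B\big(M^{(t)}_\mathrm{target}-\tilde U_\mathrm{core}C^{(t)}\tilde V_\mathrm{core}^\top\big)V_A^\top$. Here we use $M^{(t)}_\mathrm{target}=U_B^\top\Delta W^{(t)}_\mathrm{target}V_A$. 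This identity holds because the scaling in Eq.~\ref{eq:get-target}--\ref{eq:get-wt} is invariant: $\|M^{(t)}\|_F=\|\Delta W^{(t)}\|_F$, by the first bullet together with the orthonormal columns of $U_B$ and $V_A$, so $\lambda^{(t)}$ and the normalization coincide in the two spaces.
\end{itemize}

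Since $U_B$ and $V_A$ have orthonormal columns, $\|U_BYV_A^\top\|_F=\|Y\|_F$ for every $Y$. Summing over $t$ then gives $\mathcal{L}=\mathcal{L}_\mathrm{core}\circ\Phi$. The identity $\mathcal{L}_\mathrm{core}=\mathcal{L}\circ\Psi$ follows by applying this to $\Psi(\cdot)$ and using $\Phi\circ\Psi=\mathrm{id}$.

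The main subtlety is making sure $M^{(t)}_\mathrm{target}$ really equals $U_B^\top\Delta W^{(t)}_\mathrm{target}V_A$, which is the Frobenius-norm invariance argument in the third bullet.
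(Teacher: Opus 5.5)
Your proof is correct and follows essentially the same route as the paper. Both arguments factor each residual as $U_B(\cdot)V_A^\top$, use Frobenius-norm invariance under orthonormal columns, and check that the maps are mutual inverses via $U_B^\top U_B = I$ and $U_BU_B^\top U = U$; the paper phrases the latter as $U = U_B R_U$. You also check explicitly that $\|M^{(t)}\|_F = \|\Delta W^{(t)}\|_F$, so that $M^{(t)}_{\mathrm{target}} = U_B^\top \Delta W^{(t)}_{\mathrm{target}} V_A$, and that $\Psi$ lands in the column-restricted set; the paper only asserts the first and omits the second.
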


\textbf{\textit{Proof.}}~
According to the definition of $M^{(t)}_{\mathrm{target}}$ and the proof of Theorem~\ref{thm:exist}, we have
for each $t$ that
\[
  \Delta W^{(t)}_{\mathrm{target}}
  = U_B M^{(t)}_{\mathrm{target}} V_A^\top .
\]

We first recall the shapes and orthogonality of the matrices
$U,V,U_B,V_A,U_\mathrm{core},V_\mathrm{core}$:
\begin{align*}
U\in\mathbb{R}^{n\times r}, \quad & U^\top U = I_r,\\
V\in\mathbb{R}^{m\times r}, \quad& V^\top V = I_r,\\
U_B\in\mathbb{R}^{n\times (Tr_{\mathrm{in}})}, \quad& U_B^\top U_B = I_{Tr_{\mathrm{in}}},\\
V_A\in\mathbb{R}^{m\times (Tr_{\mathrm{in}})}, \quad& V_A^\top V_A = I_{Tr_{\mathrm{in}}},\\
U_\mathrm{core}\in\mathbb{R}^{ (Tr_{\mathrm{in}})\times r}, \quad&
  U_\mathrm{core}^\top U_\mathrm{core} = I_r,\\
V_\mathrm{core}\in\mathbb{R}^{ (Tr_{\mathrm{in}})\times r}, \quad&
  V_\mathrm{core}^\top V_\mathrm{core} = I_r.
\end{align*}

\textbf{From the original problem to the core problem.}~~
Let $(U,V,\{C^{(t)}\})$ be any feasible solution of
Problem~\ref{eq:main-problem} satisfying
$\mathrm{Col}(U) \subseteq \mathrm{Col}(U_B)$ and
$\mathrm{Col}(V) \subseteq \mathrm{Col}(V_A)$.
There exist matrices
$R_U, R_V \in \mathbb{R}^{(Tr_{\mathrm{in}})\times r}$ such that
\[
  U = U_B R_U, \qquad V = V_A R_V.
\]
Using the orthogonality constraints, we obtain
\[
  I_r = U^\top U = R_U^\top U_B^\top U_B R_U = R_U^\top R_U,
  \qquad
  I_r = V^\top V = R_V^\top V_A^\top V_A R_V = R_V^\top R_V.
\]
Hence $R_U$ and $R_V$ are semi-orthogonal matrices. By definition of
$\tilde U_\mathrm{core}$ and $\tilde V_\mathrm{core}$,
\[
  \tilde U_\mathrm{core} = U_B^\top U = U_B^\top U_B R_U = R_U,
  \qquad
  \tilde V_\mathrm{core} = V_A^\top V = V_A^\top V_A R_V = R_V,
\]
and therefore
$\tilde U_\mathrm{core}^\top \tilde U_\mathrm{core} = I_r$ and
$\tilde V_\mathrm{core}^\top \tilde V_\mathrm{core} = I_r$.
Thus $(\tilde U_\mathrm{core},\tilde V_\mathrm{core},\{\tilde C^{(t)}_\mathrm{core}\})$
with $\tilde C^{(t)}_\mathrm{core} := C^{(t)}$ is feasible for
Problem~\ref{eq:core-problem}.

Next we show that the per-task losses coincide. For any $t$,
\begin{align*}
  \mathcal{L}^{(t)}(U,V,C^{(t)})
  &= \big\|
       \Delta W^{(t)}_{\mathrm{target}}
       - U C^{(t)} V^\top
     \big\|_F^2 \\
  &= \big\|
       U_B M^{(t)}_{\mathrm{target}} V_A^\top
       - U_B \tilde U_\mathrm{core} \tilde C^{(t)}_\mathrm{core}
         \tilde V_\mathrm{core}^\top V_A^\top
     \big\|_F^2 \\
  &= \big\|
       U_B
       \big(
         M^{(t)}_{\mathrm{target}}
         - \tilde U_\mathrm{core} \tilde  C^{(t)}_\mathrm{core} \tilde V_\mathrm{core}^\top
       \big)
       V_A^\top
     \big\|_F^2.
\end{align*}
Since $U_B$ and $V_A$ have orthonormal columns, multiplication by
$U_B$ on the left and $V_A^\top$ on the right preserves the Frobenius
norm, therefore,
\[
  \mathcal{L}^{(t)}(U,V,C^{(t)})
  = \big\|
      M^{(t)}_{\mathrm{target}}
      - \tilde U_\mathrm{core} \tilde C^{(t)}_\mathrm{core} \tilde V_\mathrm{core}^\top
    \big\|_F^2
  = \mathcal{L}^{(t)}_\mathrm{core}
      (\tilde U_\mathrm{core},\tilde V_\mathrm{core},\tilde C^{(t)}_\mathrm{core}).
\]
Summing over $t$ yields
\[
  \mathcal{L}(U,V,\{C^{(t)}\})
  = \mathcal{L}_\mathrm{core}
      (\tilde U_\mathrm{core},\tilde V_\mathrm{core},
       \{\tilde C^{(t)}_\mathrm{core}\}).
\]

\textbf{From the core problem to the original problem.}~~
Conversely, let
$(U_\mathrm{core},V_\mathrm{core},\{C^{(t)}_\mathrm{core}\})$ be any
feasible solution of Problem~\ref{eq:core-problem}, and define
\[
  \tilde U := U_B U_\mathrm{core},\qquad
  \tilde V := V_A V_\mathrm{core},\qquad
  \tilde C^{(t)} := C^{(t)}_\mathrm{core}.
\]
We  have:
\[
\tilde U^\top \tilde U = (U_B U_\mathrm{core})^\top U_B U_\mathrm{core} = U_\mathrm{core}^\top U_B^\top U_B U_\mathrm{core} =  I_r,
\]
\[
\tilde V^\top \tilde V = (V_A V_\mathrm{core})^\top V_A V_\mathrm{core} = V_\mathrm{core}^\top V_A^\top V_A V_\mathrm{core} =  I_r.
\]
So
$(\tilde U,\tilde V,\{\tilde C^{(t)}\})$ is feasible for Problem~\ref{eq:main-problem}.
Moreover,
\begin{align*}
  \mathcal{L}^{(t)}(\tilde U,\tilde V,\tilde C^{(t)})
  &= \big\|
       \Delta W^{(t)}_{\mathrm{target}}
       - \tilde U \tilde C^{(t)} \tilde V^\top
     \big\|_F^2 \\
  &= \big\|
       U_B M^{(t)}_{\mathrm{target}} V_A^\top
       - U_B U_\mathrm{core} C^{(t)}_\mathrm{core}
         V_\mathrm{core}^\top V_A^\top
     \big\|_F^2 \\
  &= \big\|
       M^{(t)}_{\mathrm{target}}
       - U_\mathrm{core} C^{(t)}_\mathrm{core} V_\mathrm{core}^\top
     \big\|_F^2 \\
  &= \mathcal{L}^{(t)}_\mathrm{core}
       (U_\mathrm{core},V_\mathrm{core},C^{(t)}_\mathrm{core}),
\end{align*}
and summing over $t$ yields
\[
  \mathcal{L}(\tilde U,\tilde V,\{\tilde C^{(t)}\})
  = \mathcal{L}_\mathrm{core}
      (U_\mathrm{core},V_\mathrm{core},
       \{C^{(t)}_\mathrm{core}\}_{t=1}^T).
\]

\textbf{Mutual inverses.}~~
Finally, note that for any feasible
$(U_\mathrm{core},V_\mathrm{core},\{C^{(t)}_\mathrm{core}\})$,
\[
  U_B^\top (U_B U_\mathrm{core}) = U_\mathrm{core},\qquad
  V_A^\top (V_A V_\mathrm{core}) = V_\mathrm{core},
\]
so applying the two constructions in succession yields the identity
mapping on the core variables. Likewise, for any feasible
$(U,V,\{C^{(t)}\})$ in the restricted set with
$\mathrm{Col}(U)\subseteq\mathrm{Col}(U_B)$ and
$\mathrm{Col}(V)\subseteq\mathrm{Col}(V_A)$, we can write
$U = U_B R_U$, $V = V_A R_V$ and then
\[
  U_B (U_B^\top U) = U_B (U_B^\top U_B R_U) = U_B R_U = U,
\]
and similarly for $V$, showing that the constructions are also inverse
to each other on the restricted original feasible set.

This proves that the mappings between
$(U,V,\{C^{(t)}\})$ and
$(U_\mathrm{core},V_\mathrm{core},\{C^{(t)}_\mathrm{core}\})$ are
mutually inverse and preserve the objective value in both directions,
which is exactly the claim of the lemma.

\qed

\textbf{\textit{Proof of Theorem~\ref{thm:equivalent}.}}~
By Theorem~\ref{thm:exist}, Problem~\ref{eq:main-problem} admits an optimal
solution whose factors $(U,V,\{C^{(t)}\})$ satisfy
$\mathrm{Col}(U)\subseteq\mathrm{Col}(U_B)$ and
$\mathrm{Col}(V)\subseteq\mathrm{Col}(V_A)$.
Thus we may restrict Problem~\ref{eq:main-problem} to this reduced feasible set
$\mathcal{F}_\mathrm{res}$ without changing its optimal value.

On this restricted domain, Lemma~\ref{lemma:reparam} establishes a bijection
\[
\Phi : \mathcal{F}_\mathrm{res} \;\longleftrightarrow\; \mathcal{F}_\mathrm{core}
\]
between $\mathcal{F}_\mathrm{res}$ and the feasible set
$\mathcal{F}_\mathrm{core}$ of Problem~\ref{eq:core-problem}, with inverse
$\Phi^{-1}=\Psi$. And the two objective functions are preserved:
\[
\mathcal{L}(U,V,\{C^{(t)}\})
= \mathcal{L}_\mathrm{core}\big(\Phi(U,V,\{C^{(t)}\})\big),\]
\[
\mathcal{L}_\mathrm{core}(U_\mathrm{core},V_\mathrm{core}, 
\{C_\mathrm{core}^{(t)}\})
= \mathcal{L}\big(\Phi^{-1}(U_\mathrm{core},V_\mathrm{core},\{C_\mathrm{core}^{(t)}\})\big).
\]

Therefore, the two problems have the same optimal value, and
$(U,V,\{C^{(t)}\})\in\mathcal{F}_\mathrm{res}$ is optimal for
Problem~\ref{eq:main-problem} if and only if
$\Phi(U,V,\{C^{(t)}\})$ is optimal for Problem~\ref{eq:core-problem}, and vice
versa. Unfolding the explicit form of $\Phi$ and $\Phi^{-1}$ stated in
Lemma~\ref{lemma:reparam} yields exactly the correspondence claimed in
Theorem~\ref{thm:equivalent}.

\qed

\clearpage

\section{Additional Discussions}

\subsection{Difference with Alignment-based Baselines.}\label{app:sec-difference-to-knots-corespace}

Alignment-based methods first map each LoRA update to a shared coordinate system $R^{(t)}=\mathrm{Proj}(\Delta W^{(t)})$, merge them to obtain $R_{\mathrm{merged}}$, and reconstruct $\Delta W_{\mathrm{merged}}=\mathrm{Recons}(R_{\mathrm{merged}})$. KnOTS~\citep{knots} concatenates full updates and performs SVD to obtain per-task aligned matrices $[\Delta W^{(1)},\ldots,\Delta W^{(T)}]=U\Sigma [V^{(1)};\ldots;V^{(T)}]^\top$, then merges $\{V^{(t)}\}$ and reconstructs with the shared $(U,\Sigma)$. CoreSpace~\citep{core-space} uses $\mathrm{Core}^{(t)}=U_B^\top \Delta W^{(t)} V_A$ (Eq.~\ref{eq:core-space}) as $R^{(t)}$. 

These projections are designed to be information-preserving, i.e., we can have $\Delta W^{(t)} = \mathrm{Recons}(\mathrm{Proj}(\Delta W^{(t)})) $. That's why we can use the lossless CoreSpace coordinates to accelerate solving the CtM subspace-learning objective (Sec.~\ref{sec:accelerate-in-core-space}). However, since the projections preserves all information, they do not impose a rank-$r$ bottleneck, obtaining a single rank-$r$ LoRA typically still requires post-hoc truncation (e.g., Eq.~\ref{eq:mtc_truncated_svd}). In contrast, CtM learns a lossy projection bottleneck and thus guarantees the rank constraint by construction.

\subsection{Detailed Time Complexity Analysis}\label{app:sec-time-complexity}

\subsubsection{Time Complexity of Original Problem}
\paragraph{HOSVD initialization.}\label{app:original-time-complexity}
Let $\mathcal{X}_\mathrm{mode-1} \in \mathbb{R}^{n\times (mT)}$ be the mode-1 unfolding of $\mathcal{X}$. We compute its SVD $\mathcal{X}_\mathrm{mode-1} = U_\mathrm{mode-1} \Sigma_\mathrm{mode-1} V_\mathrm{mode-1}^\top$
and set the initial left factor $U_{(0)} := U_{\mathrm{mode-1}, [:,1:r]} \in \mathbb{R}^{n\times r}$,
i.e., the matrix formed by the top-$r$ left singular vectors of $\mathcal{X}_{\mathrm{mode-1}}$. Similarly, let $\mathcal{X}_{\mathrm{mode-2}} \in \mathbb{R}^{m\times (nT)}$ be the mode-2 unfolding of $\mathcal{X}$, compute $\mathcal{X}_{\mathrm{mode-2}} = U_{\mathrm{mode-2}} \Sigma_{\mathrm{mode-2}} V_{\mathrm{mode-2}}^\top,$ and define the initial right factor $V_{(0)} := U_{\mathrm{mode-2}, [:,1:r]} \in \mathbb{R}^{m\times r} $. 
Given $U_{(0)}$ and $V_{(0)}$, the corresponding core tensor is obtained in closed form as
$
\mathcal{G}_{(0)} = \mathcal{X} \times_1 U_{(0)}^\top \times_2 V_{(0)}^\top.
$

Suppose $Tr \ll m, Tr \ll n, n=m$. The time complexity of performing SVD on $\mathcal{X}_\mathrm{mode-1} \in \mathbb{R}^{n\times (mT)}$ is $\mathcal{O}\left(n^2mT\right)$, and the time complexity of performing SVD on $\mathcal{X}_{\mathrm{mode-2}} \in \mathbb{R}^{m\times (nT)}$ is $\mathcal{O}\left(m^2nT\right)$. Given $n=m$, the total time complexity is $\mathcal{O}\left(n^3T\right)$.

\paragraph{HOOI refinement.}
Starting from $U_{(0)}$ and $V_{(0)}$, HOOI alternately updates $U$ and $V$ while keeping the third-mode factor fixed as $I_T$. At iteration $k$, to update $U$, we fix $V = V_{(k)}$ and define $\mathcal{Y} = \mathcal{X} \times_2 V_{(k)}^{\top} \in \mathbb{R}^{n\times r\times T}.$
Let $\mathcal{Y}_{\mathrm{mode-1}} \in \mathbb{R}^{n\times (rT)}$ be the mode-1 unfolding of $\mathcal{Y}$, compute its SVD
$
\mathcal{Y}_{\mathrm{mode-1}} = \tilde{U}_{\mathrm{mode-1}} \tilde{\Sigma}_{\mathrm{mode-1}} \tilde{V}^{\top}_{\mathrm{mode-1}},
$
and set
$
U_{(k+1)} := \tilde{U}_{\mathrm{mode-1}, [:,1:r]}.
$
Similar operation could be done to update $V$. After updating $U$ and $V$, the core tensor is recomputed as
$
\mathcal{G}_{(k+1)} = \mathcal{X} \times_1 U_{(k+1)}^{\top} \times_2 V_{(k+1)}^{\top}.
$
This alternating scheme monotonically decreases the objective in Problem~\ref{eq:tucker-form} and converges to a stationary point of the Tucker-2 approximation problem.

The projection step, i.e., computing $\mathcal{Y} = \mathcal{X} \times_2 V_{(k)}^{\top} \in \mathbb{R}^{n\times r\times T}$, involves a matrix multiplication between a matrix with shape $n\times m \times T$ and a matrix with shape $m\times r$. The time complexity is $\mathcal{O}\left(nmTr\right)$. The time complexity of performing SVD on $\mathcal{Y}_{\mathrm{mode-1}} \in \mathbb{R}^{n\times (rT)}$ is $\mathcal{O}\left(n(Tr)^2\right)$. Given $Tr \ll m, Tr \ll n$ and $n=m$, the time complexity is dominated by the projection step and is $\mathcal{O}\left(n^2Tr\right)$. Thus, the total time complexity of HOOI is $\mathcal{O}\left(K\times n^2Tr\right)$, where $K$ is the iteration number. 

\subsubsection{Time Complexity of Core-space Problem}\label{app:core-time-complexity}
\paragraph{Construct the core space and compute $M^{(t)}$.} Constructing the core space involves performing SVD of a matrix with shape $n\times(Tr_{\mathrm{in}})$ and a matrix with shape $(Tr_{\mathrm{in}})\times m$. The time complexity is $\mathcal{O}\left(n(Tr_{\mathrm{in}})^2 + m(Tr_{\mathrm{in}})^2\right)$. To compute $M^{(t)}$, we can first compute $U_B^\top B^{(t)}$ and $A^{(t)}V_A$, then compute $M^{(t)} = \left(U_B^\top B^{(t)} \right) \left(A^{(t)}V_A \right)$. The time complexity of computing one matrix is $\mathcal{O}\left(  nT(r_{\mathrm{in}}^2) + mT(r_{\mathrm{in}}^2) + r_{\mathrm{in}}(Tr_{\mathrm{in}})^2\right)$.
Compute $T$ matrices requires $\mathcal{O}\left(  n(Tr_{\mathrm{in}})^2 + m(Tr_{\mathrm{in}})^2 + (Tr_{\mathrm{in}})^3\right)$
Given $Tr \ll n, Tr \ll m, m=n, r=r_{\mathrm{in}}$, the total time complexity is $\mathcal{O}\left(n(Tr)^2\right)$.

\paragraph{HOSVD initialization.}
Similar to the previous analysis,  HOSVD initialization needs to perform two SVDs on $(Tr_{\mathrm{in}}) \times (T^2r_{\mathrm{in}})$ matrices. The complexity is $\mathcal{O}\left((Tr_{\mathrm{in}})^2 (T^2r_{\mathrm{in}})\right)$. 
Given $r=r_{\mathrm{in}}$, the final complexity  is $\mathcal{O}\left( T^4r^3\right)$.

\paragraph{HOOI refinement.} Each update step first projects a tensor with shape $(Tr_{\mathrm{in}})\times (Tr_{\mathrm{in}})\times T$ into $(Tr_{\mathrm{in}})\times r\times T$. The time complexity is $\mathcal{O}\left(T^3r_{\mathrm{in}}^2r\right)$. The following SVD is performed on a matrix with shape $(Tr_{\mathrm{in}})\times (rT)$, the time complexity is also $\mathcal{O}\left((Tr)^3\right)$ with $r_{\mathrm{in}}=r$. Thus, the final time complexity is $\mathcal{O}\left((Tr)^3\times K\right)$, where $K$ is the iteration number.

\subsection{Algorithm}

Algorithm~\ref{alg:shared-subspace-core} gives the algorithm of Compress-then-Merge pipeline with the core-space acceleration.

\begin{algorithm}[ht]
\caption{Compress-then-Merge pipeline}
\label{alg:shared-subspace-core}
\begin{algorithmic}[1]
\INPUT $T$ LoRA updates $\{(A^{(t)},B^{(t)})\}_{t=1}^T$ with input rank $r_{\mathrm{in}}$, target rank $r$, hyperparameter $\beta\in[0,1]$, base merging method $\mathcal{M}$.
\OUTPUT Merged low-rank update $\Delta W_{\mathrm{LoRA}}$.
\STATE Stack the LoRA factors respectively and compute the SVD to obtain $U_B$ and $V_A$ (Eq~\ref{eq:core-space}):
\[
\begin{gathered}
B_{\mathrm{cat}}
= \big[ B^{(1)}, \ldots, B^{(T)} \big]
= U_B \Sigma_B V_B^\top \in \mathbb{R}^{n\times Tr_{\mathrm{in}}}, \\
A_{\mathrm{cat}}  = \big[ A^{(1)}; \ldots; A^{(T)} \big]
= U_A \Sigma_A V_A^\top  \in \mathbb{R}^{ Tr_{\mathrm{in}} \times m}. \\
\end{gathered}
\]
\STATE Compute $M^{(t)} = (U_B^\top B^{(t)}) (A^{(t) } V_A)$ and then compute $M^{(t)}_\mathrm{target}$ (Eq~\ref{eq:get-target}-\ref{eq:get-wt}):
\[
\begin{gathered}
\| M\|_{F,\mathrm{Avg}} = \frac{1}{T} \sum_{t=1}^{T} \| M^{(t)}\|_F\\
\lambda^{(t)} = \beta \| M^{(t)}\|_F + (1-\beta)\| M\|_{F,\mathrm{Avg}},\\
 M^{(t)}_{\mathrm{target}}
=
\lambda^{(t)} \frac{ M^{(t)}}{\| M^{(t)}\|_F}.
\end{gathered}
\]
\STATE Solve the following problem to obtain $U_{\mathrm{core}},V_{\mathrm{core}}$ (Problem~\ref{eq:core-problem}):
\[
\min_{\substack{
U_{\mathrm{core}}^\top U_{\mathrm{core}}=I_r,
V_{\mathrm{core}}^\top V_{\mathrm{core}}=I_r,\\
\{C_{\mathrm{core}}^{(t)}\}_{t=1}^T
}}
\sum_{t=1}^{T}
\left\|
M^{(t)}_{\mathrm{target}} - U_{\mathrm{core}} C_{\mathrm{core}}^{(t)} V_{\mathrm{core}}^\top
\right\|_F^2,
\]
\STATE Lift back to the original space 
set $U=U_B U_{\mathrm{core}}$ and $V=V_A V_{\mathrm{core}}$.
\STATE Compute low-rank coordinates for each task: 
\[
O^{(t)} = U^\top \Delta W^{(t)} V.
\]
\STATE Merge in the shared subspace: 
\[
O_{\mathrm{merged}} = \mathcal{M}\big(\{O^{(t)}\}_{t=1}^T\big).
\]
\STATE Lift back to obtain the low-rank merged result: $\Delta W_{\mathrm{LoRA}} = U\,O_{\mathrm{merged}}\,V^\top$.
\end{algorithmic}
\end{algorithm}

\clearpage

\section{Experimental Details}\label{app:additional-experimental-details}

\subsection{Datasets and Models}
Following KnOTS~\citep{knots} and CoreSpace~\citep{core-space}, we evaluate merging methods on both vision tasks and NLI tasks. 

\paragraph{Models.} For reproducibility, we adopt the publicly released KnOTS LoRA adapters on HuggingFace as our task-specific experts. For vision tasks, we use CLIP ViT-B/32 as the base model, and the LoRAs are trained with rank $r_{\mathrm{in}}{=}16$, LoRA scaling $\alpha{=}16$, and dropout $0.1$, with no bias parameters trained (\texttt{bias=none}). The adapters are injected into the attention projection modules \texttt{\{q\_proj, k\_proj, v\_proj, out\_proj\}} wherever they appear in the backbone. A fixed classification head is constructed  using the text embeddings of the dataset class labels of the CLIP text encoder (i.e., the classifier weights are obtained by encoding class names with the text encoder). 
For NLI tasks, we use the LLaMA3-8B as the base model and the LoRAs are trained with the same configuration of vision task (the \texttt{out\_proj} in LLaMA3-8B is \texttt{o\_proj}). \cite{knots} instantiates the model with a sequence-classification head (3-way NLI) using
HuggingFace \texttt{AutoModelForSequenceClassification}. During merging, only the LLaMA backbone (with LoRA
layers) is merged; evaluation on each dataset uses its corresponding task-specific classification head (for
binary NLI datasets, the missing label is masked). 

\paragraph{Datasets.}

For vision task, we adopt the following eight datasets spanning fine-grained recognition, textures, remote sensing, traffic signs, digits, and scene classification.

\begin{itemize}[leftmargin=*, noitemsep, topsep=5pt]
\item \textbf{Cars}~\citep{DBLP:conf/iccvw/Krause0DF13}: a fine-grained recognition benchmark for distinguishing car categories. 
\item \textbf{Describable Textures Dataset (DTD)}~\citep{DBLP:conf/cvpr/CimpoiMKMV14}: a texture classification dataset organized by human-describable texture attributes (e.g., ``striped'', ``dotted''). 
\item \textbf{EuroSAT}~\citep{DBLP:conf/igarss/HelberBDB18}: a remote-sensing land-use/land-cover classification dataset built from Sentinel-2 imagery. 
\item \textbf{German Traffic Sign Recognition Benchmark (GTSRB)}~\citep{DBLP:journals/nn/StallkampSSI12}: a traffic-sign recognition benchmark featuring real-world German road signs under diverse imaging conditions. 
\item \textbf{MNIST}~\citep{DBLP:journals/pieee/LeCunBBH98}: a classic handwritten digit classification dataset widely used for benchmarking recognition methods. 
\item \textbf{NWPU-RESISC45}~\citep{DBLP:journals/pieee/ChengHL17}: a remote-sensing scene classification dataset with substantial intra-class variation and inter-class similarity. 
\item \textbf{SUN397}~\citep{DBLP:journals/ijcv/XiaoEHTO16}: a large-scale scene recognition dataset covering a broad range of indoor and outdoor environments. 
\item \textbf{Street View House Numbers (SVHN)}~\citep{netzer2011reading}: a digit classification benchmark based on house-number images from Google Street View. 
\end{itemize}

For NLI task, we adopt the following six datasets: 

\begin{itemize}[leftmargin=*, noitemsep, topsep=5pt]
\item \textbf{SNLI}~\citep{DBLP:conf/emnlp/BowmanAPM15}: a large NLI dataset of sentence pairs labeled as entailment, neutral, or contradiction, constructed mainly from image captions.
\item \textbf{MNLI}~\citep{DBLP:conf/naacl/WilliamsNB18}: a multi-genre NLI benchmark with the same three-way labels as SNLI, designed to test cross-domain generalization.
\item \textbf{SICK}~\citep{DBLP:conf/lrec/MarelliMBBBZ14}: a sentence-pair benchmark for textual entailment and semantic relatedness.
\item \textbf{QNLI}~\citep{DBLP:conf/iclr/WangSMHLB19}: a binary GLUE task created by recasting SQuAD~\citep{DBLP:conf/emnlp/RajpurkarZLL16} question answering into sentence-pair inference (entailment vs.\ not-entailment).
\item \textbf{RTE}~\citep{DBLP:conf/iclr/WangSMHLB19}: a binary textual entailment task in GLUE assembled from multiple RTE challenge datasets.
\item \textbf{SCITAIL}~\citep{DBLP:conf/aaai/KhotSC18}: a science-domain entailment dataset built from multiple-choice QA, pairing retrieved science sentences (premises) with question–answer hypotheses.
\end{itemize}

\subsection{Implementation Details and Hyperparameter Search}\label{app:sec:details-hyperparameters-search}
All compared methods are evaluated under the same hyperparameter search protocol.
Following the setup of \citet{core-space}, we perform a linear search on a holdout validation set and report the corresponding performance on the test set using the best validation configuration.Different methods involve different sets of hyperparameters, which we describe below.

All methods include a scaling factor $\alpha$ that controls the overall magnitude of the merged task vector.
Unless otherwise specified, we search
$
\alpha \in \{0.000, 0.025, 0.050, \ldots, 4.000\}
$ with step $0.025$.

TIES introduces one additional hyperparameter, the retention coefficient top-$K$, which controls the fraction
of parameters retained during pruning: the largest $K\%$ entries (by absolute magnitude) are preserved. We search
$
K \in \{0, 5, 10, \ldots, 100\}
$ with step $5$.

DARE-TIES uses the same top-$K$ retention coefficient as TIES. In addition, it includes a pruning (drop) rate
that controls the strength of random pruning applied before performing the TIES merge. We search this pruning
coefficient in
$
p \in \{0.00, 10^{-5}, 0.05, 0.10, \ldots, 1.00\}
$ with step $0.05$. Following the \citet{core-space} codebase, we include $10^{-5}$ as a near-zero value (in addition to $0$) to
avoid implementation edge cases at exactly zero while remaining effectively ``no pruning''.

Iso-C does not introduce additional hyperparameters. However, its whitening step normalizes the singular spectrum (i.e., it flattens the singular values), which makes a subsequent truncated SVD ill-defined. Therefore, we first apply truncation to the merged update produced by TA to obtain a rank-$r$ approximation, and then apply the whitening operation to the truncated result.

LoRA-LEGO introduces one additional hyperparameter $k$, which determines the number of clusters/components used in
its low-rank merging procedure. Since our goal is to output a single rank-$r$ LoRA adapter, we set $k=r$ in all
experiments.

RobustMerge introduces an additional top-$K$ retention coefficient (defined in the same way as in TIES), which
we search over
$
K \in \{0, 5, 10, \ldots, 100\}.
$

Iso-CTS explicitly decomposes the merged update into two parts: \emph{common} directions extracted from the summed task matrices and \emph{task-specific} directions extracted from each task’s residual matrix. We set the task-specific subspace dimension to 1 per task, and the common subspace dimension to $r - T$, where $r$ is the target rank and $T$ is the number of tasks. The remaining hyperparameter, the scaling factor $\alpha$, is selected via a search following the default procedure described above.

Our method introduces the reweighting hyperparameter $\beta$ and we search it in $\{0.0, 0.25 ,0.5, 0.75\}$.  For each value, we use the linear search strategy to search other hyperparameters. 

The searched optimal parameters used in Table~\ref{table:vit} and Table~\ref{table:llama} are provided in Table~\ref{table:optimal-hyperparameters-vit} and Table~\ref{table:optimal-hyperparameters-llama}, respectively.  {Importantly, the best hyperparameters for the pre-compression merge and the post-compression are searched independently}, so $\mathrm{Avg}_\mathrm{full}$ and $\mathrm{Avg}$ each reflect their respective tuned optimum.

\begin{table}[ht]
\centering
\caption{Searched optimal hyperparameters for Table~\ref{table:vit}.}
\label{table:optimal-hyperparameters-vit}

\begin{tabular}{ll ccccc}

\toprule
Method & Space & SVD Truncation & Scaling factor & Top-$K$ & Pruning Ratio & $\beta$  \\
\midrule
LoRA-LEGO   &  --  &  -- & 0.125 & -- & -- & --  \\
RobustMerge &  --  &  -- & 1.350 & 30 & -- & --  \\
Iso-CTS     &  --  &  -- & 0.250 & -- & -- & --  \\
\midrule
\multirow{6}{*}{Iso-C}
& Full      & \Checkmark & 0.225 & -- & -- & -- \\
& KnOTS     & \Checkmark & 0.125 & -- & -- & -- \\
& CoreSpace & \Checkmark & 0.225 & -- & -- & -- \\
& Full      & \XSolid    & 0.875 & -- & -- & -- \\
& KnOTS     & \XSolid    & 0.150 & -- & -- & -- \\
& CoreSpace & \XSolid    & 0.875 & -- & -- & -- \\
\midrule
\multirow{6}{*}{TIES}
& Full      & \Checkmark & 0.325 & 35 & -- & -- \\
& KnOTS     & \Checkmark & 0.225 & 30 & -- & -- \\
& CoreSpace & \Checkmark & 0.675 & 5  & -- & -- \\
& Full      & \XSolid    & 0.325 & 25 & -- & -- \\
& KnOTS     & \XSolid    & 0.225 & 25 & -- & -- \\
& CoreSpace & \XSolid    & 0.675 & 25 & -- & -- \\
\midrule
\multirow{6}{*}{DARE-TIES}
& Full      & \Checkmark & 0.300 & 35 & 0.20 & -- \\
& KnOTS     & \Checkmark & 0.225 & 30 & 0.00 & -- \\
& CoreSpace & \Checkmark & 0.600 & 10 & 0.10 & -- \\
& Full      & \XSolid    & 0.325 & 30 & 0.20 & -- \\
& KnOTS     & \XSolid    & 0.200 & 25 & 0.05 & -- \\
& CoreSpace & \XSolid    & 0.600 & 15 & 0.10 & -- \\
\midrule
Iso-C        &  --  &  -- & 0.225 & -- & -- & 0.00  \\
TIES        &  --  &  -- & 0.750 & 25 & -- & 0.25  \\
DARE-TIES   &  --  &  -- & 0.750 & 20 & 0.05 & 0.25 \\
\bottomrule
\end{tabular}
\end{table}

\begin{table}[ht]
\centering
\caption{Searched optimal hyperparameters for Table~\ref{table:llama}.}
\label{table:optimal-hyperparameters-llama}

\begin{tabular}{ll ccccc}

\toprule
Method & Space & SVD Truncation & Scaling factor & Top-$K$ & Pruning Ratio & $\beta$  \\
\midrule
LoRA-LEGO   &  --  &  -- & 0.425 & -- & -- & --  \\
RobustMerge &  --  &  -- & 0.825 & 30 & -- & --  \\
Iso-CTS     &  --  &  -- & 0.850 & -- & -- & --  \\
\midrule
\multirow{7}{*}{Iso-C}
& Full      & \Checkmark & 0.825 & -- & -- & -- \\
& KnOTS     & \Checkmark & 0.500 & -- & -- & -- \\
& CoreSpace & \Checkmark & 0.825 & -- & -- & -- \\
& Full      & \XSolid    & 2.575 & -- & -- & -- \\
& KnOTS     & \XSolid    & 0.600 & -- & -- & -- \\
& CoreSpace & \XSolid    & 2.850 & -- & -- & -- \\
\midrule
\multirow{7}{*}{TIES}
& Full      & \Checkmark & 0.725 & 10 & -- & -- \\
& KnOTS     & \Checkmark & 0.550 & 30 & -- & -- \\
& CoreSpace & \Checkmark & 1.100 & 80 & -- & -- \\
& Full      & \XSolid    & 0.750 & 10 & -- & -- \\
& KnOTS     & \XSolid    & 0.575 & 35 & -- & -- \\
& CoreSpace & \XSolid    & 1.075 & 75 & -- & -- \\
\midrule
\multirow{7}{*}{DARE-TIES}
& Full      & \Checkmark & 0.725 & 10 & 1e-5 & -- \\
& KnOTS     & \Checkmark & 0.550 & 30 & 0.25 & -- \\
& CoreSpace & \Checkmark & 1.000 & 30 & 1e-5 & -- \\
& Full      & \XSolid    & 0.725 & 10 & 0.10 & -- \\
& KnOTS     & \XSolid    & 0.500 & 30 & 0.20 & -- \\
& CoreSpace & \XSolid    & 0.950 & 15 & 1e-5 & -- \\
\midrule
Iso-C        &  --  &  -- & 0.875 & -- & -- & 0.25  \\
TIES        &  --  &  -- & 1.225 & 45 & -- & 0.75  \\
DARE-TIES   &  --  &  -- & 1.225 & 55 & 0.10 & 0.75 \\
\bottomrule
\end{tabular}
\end{table}

\clearpage

\section{Additional Experimental Results}

\subsection{CtM Learns a More Effective and Balanced Subspace from Both Energy and Functional Perspective.}
\label{app:sec-subspace-cover}

To compare the subspaces learned by CtM and MtC, we evaluate them from both the energy and functional perspectives.
For the energy part, we measure how much of the original LoRA updates can be captured by the left/right subspaces induced by their rank-$r$ merged results. 
For the function part, we further evaluate how much task utility the projected LoRAs can preserve compared to that of the original task-specific LoRAs.

Assume there are $N$ LoRA-injected layers. Consider any method that outputs a rank-$r$ update
$\Delta W_{\mathrm{LoRA}}^{i}$ at layer $i$. We extract its left/right subspaces $(U^{i},V^{i})$ via the thin SVD,
$\Delta W_{\mathrm{LoRA}}^{i}=U^{i}\Sigma^{i}V^{i\top}$, where $U^{i\top}U^{i}=I_r$ and $V^{i\top}V^{i}=I_r$.
Given the original task update $\Delta W^{(t),i}$ at layer $i$, we project it onto $(U^{i},V^{i})$ by
\[
\widehat{\Delta W}^{(t),i}
=
U^{i}U^{i\top}\,\Delta W^{(t),i}\,V^{i}V^{i\top}.
\]
We define the per-layer energy retention ratio as
\[
\rho^{(t),i}
=
\frac{\big\|\widehat{\Delta W}^{(t),i}\big\|_F}{\big\|\Delta W^{(t),i}\big\|_F}
\in[0,1].
\]
We report the layer-averaged retention
\[
\overline{\rho}^{(t)}
=
\frac{1}{N}\sum_{i=1}^{N}\rho^{(t),i}.
\]
A higher $\overline{\rho}^{(t)}$ indicates a subspace that better captures the original LoRA updates across layers.

For the functional evaluation, we compute the accuracy of the original LoRA on its corresponding task, denoted as
$\mathrm{Acc}_{\mathrm{ori}}$, and the accuracy of the projected LoRA, denoted as $\mathrm{Acc}_{\mathrm{proj}}$.
We then define the accuracy retention as
\[
\eta^{(t)}
=
\frac{\mathrm{Acc}_{\mathrm{proj}}}{\mathrm{Acc}_{\mathrm{ori}}}.
\]
This metric directly measures how much task utility is preserved after projecting the original task LoRA onto the shared
rank-$r$ subspace.

For CtM, $\Delta W_{\mathrm{LoRA}}^{i}$ is the rank-$r$ update $UO_{\mathrm{merged}}V^\top$. For MtC, we use
\emph{task arithmetic}, which linearly sums the LoRA updates. This avoids non-linear merging methods (e.g., TIES) that
yield updates outside the linear span of the original LoRAs.

Figure~\ref{fig:energy} summarizes the results. CtM consistently achieves higher or similar energy retention after projection, indicating that its rank-$r$ subspace captures a larger fraction of the original LoRA updates. More importantly, CtM exhibits substantially lower variance across datasets, suggesting a more balanced and stable coverage of task directions. In contrast, MtC shows significant collapse on several tasks, despite comparable coverage on others, reflecting its sensitivity to scale and truncation.

The functional evaluation leads to a consistent conclusion. CtM improves the mean accuracy retention, and raises the worst-task retention significantly. This shows that CtM does not merely
preserve more parameter-space energy; it also retains the task-relevant components needed for downstream performance.
Overall, the energy and functional results jointly demonstrate that CtM learns a subspace that better preserves multi-task
structure and task utility under a fixed rank constraint.

Beyond the CLIP results, we further conduct the same functional projection test on NLI, as reported in Figure~\ref{fig:energy-nli}.
The conclusion remains consistent from the perspectives of mean performance, variance, and worst-case robustness: CtM improves the mean functional retention from $93.24\%$ to $96.43\%$, reduces the cross-task standard deviation, and, more importantly, substantially improves the worst-task retention from $66.86\%$ to $91.13\%$. Specifically, MtC suffers a sharp collapse on SICK, which both lowers its mean retention and increases its cross-task variability. In contrast, CtM avoids such severe task collapse and produces a more stable retention profile across tasks. This result shows that the subspace learned by CtM is more balanced under the fixed-rank constraint, improving the average retained utility and providing stronger protection against worst-case task degradation.

\begin{figure}[ht]
  \centering
  \begin{minipage}[c]{0.23\textwidth}
    \centering
    \includegraphics[width=\linewidth]{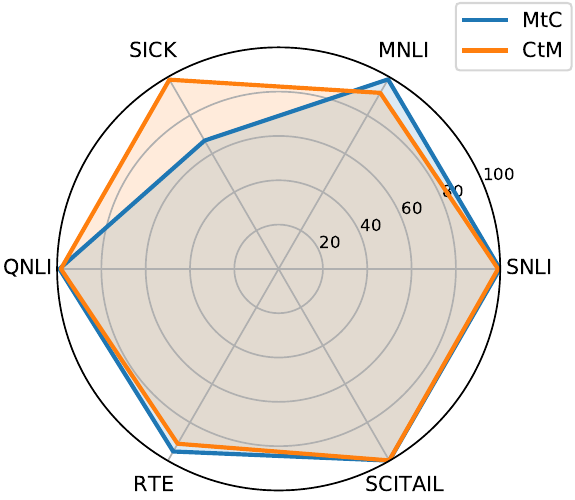}
  \end{minipage}
  \begin{minipage}[c]{0.23\textwidth}
    \centering
    \renewcommand{\arraystretch}{1.1}
    \setlength{\tabcolsep}{4pt}
    \small
    \begin{tabular}{ccc}
      \toprule
       & MtC & CtM \\
      \midrule
      Mean    & 93.24    & 96.43    \\
      Worst   & 66.86    & 91.13    \\
      Std     & 13.04    & 3.91    \\
      \bottomrule
    \end{tabular}
  \end{minipage}
  \caption{Function (accuracy) retention of CtM and MtC on NLI.}
  \label{fig:energy-nli}
\end{figure}

\subsection{Core-space Acceleration with Preserved Solution Quality.}\label{app:distance-time-table}

Table~\ref{table:time} shows that optimizing in the core space is substantially faster,  while preserving the final merged
performance.

To quantify how close the shared subspaces learned in the {full space} and in the
{core space} are, we use two distances: a
basis-invariant subspace distance (Chordal) and a basis-aligned element-wise distance.
Suppose each adapter contains $N$ LoRA-injected layers. For layer $i$, let $(U_{\mathrm{full}}^{i}, V_{\mathrm{full}}^{i})$ denote the rank-$r$ left/right subspaces learned by optimizing Problem~\ref{eq:main-problem} directly in the full space, and let $(U^{i}, V^{i})$ denote the corresponding rank-$r$ subspaces obtained from the core-space formulation (i.e., after lifting the core-space solution back to the full space). 
We compute a per-layer distance for the left and right subspaces, denoted as
$\mathrm{Dis}_{U}^{i}$ and $\mathrm{Dis}_{V}^{i}$, and report the average distance across all layers:
\[
\mathrm{AvgDis}
=
\frac{1}{2N}\sum_{i=1}^{N}\Big(\mathrm{Dis}_{U}^{i} + \mathrm{Dis}_{V}^{i}\Big).
\]

\paragraph{Chordal distance.}
Chordal distance  depends only on the subspaces (their column spaces) and is invariant to any rotation of the chosen bases. Concretely, given two semi-orthogonal matrices $Q_1,Q_2\in\mathbb{R}^{d\times r}$ with $Q_1^\top Q_1 = Q_2^\top Q_2 = I_r$,
the chordal distance is
\[
d_{\mathrm{chord}}(Q_1,Q_2)
=
\big\|\sin\Theta\big\|_F
=
\sqrt{\,r - \|Q_1^\top Q_2\|_F^2\,},
\]
where $\Theta=\mathrm{diag}(\theta_1,\ldots,\theta_r)$ contains the principal angles between the two subspaces.
For $r$-dimensional subspaces, $d_{\mathrm{chord}}(\cdot,\cdot)\in[0,\sqrt{r}]$: $0$ indicates identical
subspaces, while $\sqrt{r}$ corresponds to maximally separated (orthogonal) subspaces.

We set $\mathrm{Dis}_{U}^{i}=d_{\mathrm{chord}}(U_{\mathrm{full}}^{i},U^{i})$ and
$\mathrm{Dis}_{V}^{i}=d_{\mathrm{chord}}(V_{\mathrm{full}}^{i},V^{i})$ for each layer $i$.

\paragraph{Element-wise distance.}
While the chordal distance  compares {subspaces} and is invariant to any rotation of the chosen bases, the {coordinates} used for merging depend on the specific bases $(U,V)$.
A rotation within the same $r$-dimensional subspace can change coordinates substantially: for instance, in $\mathbb{R}^2$, the coordinate $(1,0)$ becomes $\big(\tfrac{\sqrt{2}}{2},\tfrac{\sqrt{2}}{2}\big)$ after a $\tfrac{\pi}{4}$ rotation. 
This matters because non-linear merging methods such as TIES operate element-wise (e.g., through sign-based masking and pruning) and are therefore sensitive to basis rotations, even when the underlying subspaces are identical.

To more strictly assess whether the  bases learned in the full space and the core space match, we also report an element-wise absolute distance between the corresponding semi-orthogonal matrices. 
Given $Q_1,Q_2\in\mathbb{R}^{d\times r}$ with orthonormal columns, we define
\[
d_{\mathrm{elem}}(Q_1,Q_2)
=
\frac{1}{dr}\,\|Q_1-Q_2\|_1,
\] 
where $\|\cdot\|_1$ denotes the entry-wise $\ell_1$ norm (sum of absolute values). We set $\mathrm{Dis}_{U}^{i}=d_{\mathrm{elem}}(U_{\mathrm{full}}^{i},U^{i})$ and
$\mathrm{Dis}_{V}^{i}=d_{\mathrm{elem}}(V_{\mathrm{full}}^{i},V^{i})$ for each layer $i$.

To avoid possible sign ambiguity of basis vectors (a sign flip would substantially change the element-wise
distance but would not affect TIES), we make
$d_{\mathrm{elem}}$ invariant to column-wise sign flips by fixing a deterministic sign convention. For each
column $j$, we choose an index $p=\arg\max_i |Q_{ij}|$, and if $Q_{pj}<0$ we multiply the entire column by $-1$.
We apply this canonicalization independently to $Q_1$ and $Q_2$, and then compute $d_{\mathrm{elem}}(Q_1,Q_2)$.

\subsection{Applying $\beta$-Reweighting to MtC Baselines.}\label{app:apply-beta-to-baseline}

\paragraph{How $\beta$ is used in CtM.}
CtM introduces a single hyperparameter $\beta$ through the task-importance weights $\lambda^{(t)}$
(Eq.~\ref{eq:get-target}--\ref{eq:get-wt}).
Importantly, this reweighting is applied \emph{only} when learning the shared subspace: $(U,V)$ is learned by fitting
the surrogate targets $\Delta W^{(t)}_{\mathrm{target}}$, while the merge-time coordinates are computed from the
\emph{original} updates,
\[
    O^{(t)} = U^\top \Delta W^{(t)} V .
\]
Therefore, $\beta$ controls which directions are preserved under the rank-$r$ bottleneck, rather than directly
rescaling the quantities being merged.

Although projection onto a learned rank-$r$ subspace is generally lossy and may change the per-task Frobenius norm ($\|O^{(t)}\|_F$ versus $\|\Delta W^{(t)}\|_F$), our subspace-coverage analysis in
Section~\ref{sec:subspace-coverage} demonstrates  that the \emph{energy retention ratio} remains similar across tasks.
This indicates that the projection step tends to reduce task magnitudes in a roughly proportional manner, and does not
act as an additional, task-specific reweighting at merge time.

\paragraph{$\beta$-transfer as a control test.}
We further investigate  whether $\beta$-reweighting acts as a generic scale-compensation mechanism that could account for CtM's performance gains.
To this end, we  introduce the same degree of freedom into the MtC baselines:  for a given $\beta$,  we rescale each task's update according to Eq.~\ref{eq:get-target} (i.e., using the same $\lambda^{(t)}$ rule).
We then apply the unchanged original MtC pipeline on the rescaled updates. All experiments with different $\beta$ values are conducted by re-searching the hyperparameters to ensure fairness.

Figure~\ref{fig:apply-beta-to-baselines} shows that allowing $\beta$ yields very limited improvements for the strongest baseline (notably CoreSpace), consistent with the intuition that task reweighting can partially mitigate scale-dominance effects. However, even after sweeping $\beta$, the best baseline remains far below CtM. 
This suggests that CtM's advantage is not primarily driven by possessing an additional tunable parameter. Instead, it
comes from enforcing the rank constraint \emph{before} merging by learning a shared subspace and performing merging
directly in the resulting low-dimensional coordinates, thereby avoiding the brittle post-hoc truncation step inherent
to MtC.

Overall, these results support the following takeaway: reweighting alone provides limited gains for MtC, whereas in CtM
$\beta$ interacts with the rank constraint to \emph{shape} the shared subspace, leading to substantially better
multi-task trade-offs.

\begin{figure}[t]
  \centering
  \includegraphics[width=0.5\textwidth]{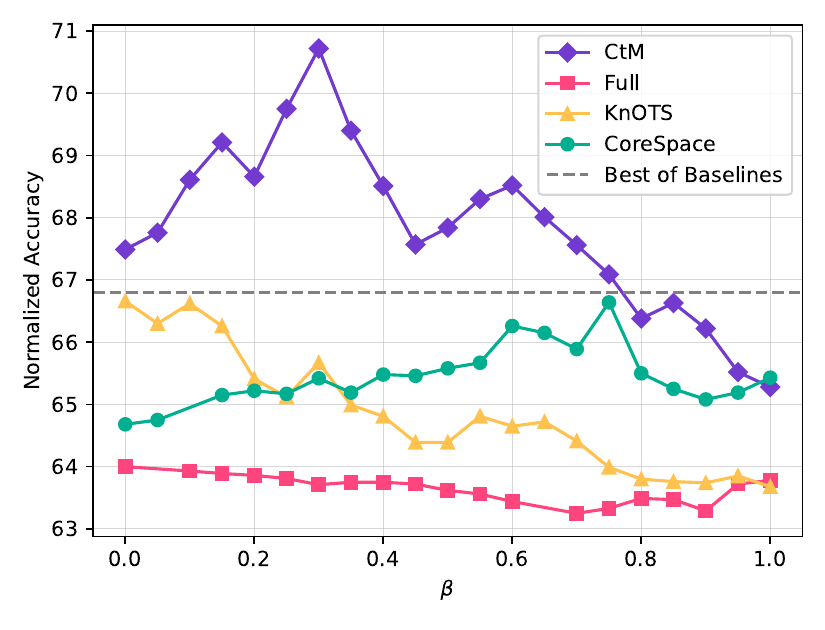}
  \caption{Normalized accuracy when applying $\beta$-reweighting to MtC baselines.  All methods are evaluated using TIES as the base merging method.
    Reweighting improves the best baseline modestly, but CtM remains substantially higher for small-to-moderate $\beta$, showing that CtM's gains are not attributable to an extra hyperparameter alone.}
    \label{fig:apply-beta-to-baselines}
\end{figure}

\subsection{CtM with Additional Base Merge Rules}
\label{app:more-basers}

Beyond the main results in Tables~\ref{table:vit}-\ref{table:llama}, we additionally combine CtM with a broader set of training-free merge rules, including Task Arithmetic~\citep{ta}, TSV~\citep{tsv},
and PCB-Merging~\citep{DBLP:conf/nips/0002LLJGYLGTH024}. The detailed results are reported in Table~\ref{table:more-base-merging-methods}. 
Overall, CtM either improves or matches the corresponding MtC baselines. 
Together with the main experiments, we observe that the largest gains typically occur for \emph{coordinate-wise} rules (e.g., TIES, DARE-TIES, and PCB-Merging). We hypothesize that this is because, after CtM projects task updates into a compact $r\times r$ coordinate system, the cross-task conflicts are concentrated and are easier for coordinate-wise heuristics to detect and resolve.

We note that for TSV, CtM yields almost no improvement over its MtC counterpart.
We hypothesize that this behavior stems from TSV's explicit \emph{per-task spectral budget} in the merge coordinate system. 
Concretely, TSV allocates each task roughly a $1/T$ fraction of the available singular directions. When TSV operates on a high-dimensional coordinate space (e.g., an $n\times n$ matrix), this corresponds to an effective per-task budget of about $n/T$ directions. In contrast, under CtM all task updates are already constrained to lie in a shared $r\times r$ coordinate space, so the same $1/T$ budgeting reduces the effective per-task capacity to approximately $r/T$ directions. For our typical setting with $r{=}16$ and $T{=}8$, this leaves each task with only about two singular directions (i.e., a rank-2 budget), which can be overly restrictive and may discard substantial task-relevant information. As a result, CtM offers limited headroom for TSV and the two pipelines perform similarly.

Finally, although the magnitude of gains varies across merge rules, these results consistently suggest that enforcing the rank-$r$ bottleneck \emph{before} merging is a robust design choice: CtM rarely hurts and often improves upon MtC. 
At the same time, the varying gains across different merge rules suggest that  some merging rules can better exploit the shared $r\times r$ coordinates than others. This gap indicates room for developing merge rules that are more compatible with CtM, i.e., explicitly designed to mitigate conflicts \emph{within} the shared low-dimensional coordinate space. We leave the exploration of such CtM-tailored merging operators to future work.

\begin{table*}[!ht]
\centering
\caption{Normalized accuracies on 8 vision tasks with ViT-B/32 with more base merging rules.}
\label{table:more-base-merging-methods}

\small
\setlength{\tabcolsep}{3pt}

\begin{tabular}{ll cccccccc A}
\toprule
Method & Space & Cars & DTD & EuroSAT & GTSRB & MNIST & RESISC & SUN397 & SVHN & Avg  \\
\midrule
\multicolumn{2}{l}{\textit{LoRA's~absolute~accuracy}}  & \textit{74.00} & \textit{58.30} & \textit{99.00} & \textit{92.70} & \textit{99.30} & \textit{88.40} & \textit{64.50} & \textit{96.20} & -- 
\\
\midrule
\multirow{4}{*}{Task Arithmetic}
& Full      & 81.82 & 72.81 & 47.51 & 38.98 & 52.30 & 69.87 & 96.92 & 40.67 & 62.61  \\
& KnOTS     & 81.65 & 72.63 & 47.66 & 38.97 & 52.44 & 70.03 & 96.92 & 40.64 & 62.62  \\
& CoreSpace & 81.76 & 72.81 & 47.51 & 39.00 & 52.44 & 69.92 & 96.92 & 40.70 & 62.63 \\ 
\arrayrulecolor{lightgray} 
\cmidrule[0.25pt]{2-11} 
\arrayrulecolor{black}
& CtM (ours) & 81.04 & 73.81 & 49.08 & 39.96 & 51.32 & 70.76 & 97.57 & 39.00 & 62.82 \\
\midrule
\multirow{4}{*}{TSV}
& Full      & 83.40 & 74.08 & 49.38 & 37.76 & 55.97 & 70.78 & 97.30 & 40.29 & 63.62  \\
& KnOTS     & 82.24 & 73.45 & 49.61 & 38.65 & 53.50 & 70.01 & 97.43 & 39.84 & 63.09  \\
& CoreSpace & 83.65 & 72.81 & 48.97 & 36.84 & 56.78 & 70.55 & 97.04 & 42.25 & 63.61 \\
\arrayrulecolor{lightgray} 
\cmidrule[0.25pt]{2-11} 
\arrayrulecolor{black}
& CtM (ours) & 81.69 & 74.54 & 51.52 & 42.04 & 49.71 & 71.82 & 97.44 & 40.37 & 63.64  \\
\midrule
\multirow{4}{*}{PCB-Merging}
& Full      & 82.05 & 72.90 & 50.58 & 37.11 & 53.86 & 69.47 & 97.04 & 40.42 & 62.93  \\
& KnOTS     & 82.30 & 72.99 & 46.54 & 38.72 & 54.91 & 69.35 & 96.37 & 43.23 & 63.05  \\
& CoreSpace & 81.76 & 73.72 & 49.91 & 41.05 & 55.56 & 69.36 & 97.45 & 37.71 & 63.31 \\
\arrayrulecolor{lightgray} 
\cmidrule[0.25pt]{2-11} 
\arrayrulecolor{black}
& CtM (ours) & 81.27 & 72.90 & 60.91 & 37.50 & 61.18 & 69.40 & 96.72 & 50.83 & 66.34  \\
\bottomrule
\end{tabular}
\end{table*}

\subsection{Detailed Pre-compression Results}
\label{app:detailed_tables}

Tables~\ref{table:vit-detailed} and~\ref{table:llama-detailed} report detailed pre-truncation full-weight results ($\mathrm{Avg}_\mathrm{full}$ in Table~\ref{table:vit} and Table~\ref{table:llama}) for MtC baselines.

\begin{table*}[!ht]
\centering
\caption{Normalized accuracies on 8 vision tasks with ViT-B/32, including full-weight results before SVD truncation (supplementary to Table~\ref{table:vit}). }
\label{table:vit-detailed}

\small
\setlength{\tabcolsep}{3pt}

\begin{tabular}{ll c cccccccc A}

\toprule
Method & Space & SVD Truncation & Cars & DTD & EuroSAT & GTSRB & MNIST & RESISC & SUN397 & SVHN & Avg \\
\midrule
\multicolumn{3}{l}{\textit{LoRA's~absolute~accuracy}}  & \textit{74.00} & \textit{58.30} & \textit{99.00} & \textit{92.70} & \textit{99.30} & \textit{88.40} & \textit{64.50} & \textit{96.20} & -- 
\\
\midrule
LoRA-LEGO        & -- & -- & 81.42 & 72.72 & 47.03 & 38.89 & 50.88 & 69.83 & 97.21 & 40.90 & 62.36 
\\
RobustMerge & -- & -- & 81.38 & 74.91 & 52.67 & 44.71 & 55.15 & 69.56 & 97.17 & 38.94 & 64.31 \\
Iso-CTS & -- & -- & 81.93 & 76.91 & 42.99 & 53.01 & 59.42 & 74.10 & 98.35 & 43.98 & 66.34  \\
\midrule
\multirow{7}{*}{Iso-C}
& Full      & \Checkmark & 82.98 & 74.18 & 49.42 & 42.96 & 62.95 & 71.18 & 97.99 & 44.14 & 65.72   \\
& Full      & \XSolid    & 83.42 & 84.30 & 49.42 & 82.08 & 71.41 & 83.39 & 99.01 & 54.23 & 75.91   \\
& KnOTS     & \Checkmark & 80.37 & 73.81 & 46.35 & 38.76 & 48.94 & 69.83 & 97.32 & 37.17 & 61.57   \\
& KnOTS     & \XSolid    & 82.49 & 73.54 & 49.49 & 44.37 & 54.22 & 71.99 & 97.05 & 43.77 & 64.61   \\
& CoreSpace & \Checkmark & 82.98 & 74.08 & 49.31 & 42.92 & 63.10 & 71.12 & 97.99 & 44.17 & 65.71   \\
& CoreSpace & \XSolid    & 83.33 & 84.21 & 49.61 & 82.05 & 71.32 & 83.35 & 98.95 & 54.17 & 75.87   \\
\midrule
\multirow{7}{*}{TIES}
& Full      & \Checkmark & 82.51 & 72.90 & 49.94 & 36.93 & 55.41 & 69.22 & 96.95 & 43.15 & 63.38 \\
& Full      & \XSolid    & 82.37 & 72.35 & 51.07 & 37.23 & 57.40 & 69.58 & 96.76 & 44.78 & 63.94 \\
& KnOTS     & \Checkmark & 82.35 & 72.99 & 46.61 & 38.66 & 56.50 & 69.27 & 96.44 & 44.22 & 63.38 \\
& KnOTS     & \XSolid    & 82.64 & 72.72 & 45.79 & 42.09 & 58.41 & 70.37 & 96.21 & 47.01 & 64.40 \\
& CoreSpace & \Checkmark & 83.54 & 73.81 & 53.09 & 39.97 & 65.18 & 68.05 & 95.25 & 43.73 & 65.33 \\
& CoreSpace & \XSolid    & 83.84 & 76.00 & 50.47 & 50.65 & 65.22 & 71.28 & 96.50 & 47.85 & 67.73 \\
\midrule
\multirow{7}{*}{DARE-TIES}
& Full      & \Checkmark & 82.51 & 72.81 & 49.57 & 37.03 & 56.12 & 69.31 & 96.90 & 42.75 & 63.37 \\
& Full      & \XSolid    & 82.77 & 72.35 & 50.62 & 36.92 & 58.01 & 69.40 & 96.53 & 45.65 & 64.03 \\
& KnOTS     & \Checkmark & 82.35 & 72.99 & 46.61 & 38.66 & 56.50 & 69.27 & 96.44 & 44.22 & 63.38 \\
& KnOTS     & \XSolid    & 82.85 & 73.72 & 47.25 & 41.72 & 58.16 & 70.71 & 96.75 & 45.20 & 64.55 \\
& CoreSpace & \Checkmark & 84.01 & 73.45 & 53.95 & 40.90 & 64.55 & 69.13 & 96.17 & 45.10 & 65.91 \\
& CoreSpace & \XSolid    & 84.38 & 76.27 & 57.28 & 51.28 & 67.21 & 71.79 & 96.46 & 51.72 & 69.55 \\
\midrule
Iso-C       & Ours & -- & 80.83 & 77.55 & 44.37 & 56.45 & 56.03 & 73.69 & 97.72 & 43.55 & 66.27  \\
TIES       & Ours & -- & 82.87 & 75.27 & 64.46 & 38.50 & 78.22 & 70.67 & 97.07 & 50.93 & 69.75  \\
DARE-TIES  & Ours & -- & 83.04 & 75.00 & 64.95 & 38.28 & 79.98 & 69.90 & 96.53 & 57.73 & 70.68  \\
\bottomrule
\end{tabular}
\end{table*}

\begin{table*}[t!]
\centering
\caption{Normalized accuracies on 6 NLI tasks with LLaMA3-8B, including full-weight results before SVD truncation (supplementary to Table~\ref{table:llama}).}
\label{table:llama-detailed}

\small

\begin{tabular}{ll ccccccc A}

\toprule
Method & Space & SVD Truncation & SNLI & MNLI & SICK & QNLI & RTE & SCITAIL & Avg \\
\midrule
\multicolumn{3}{l}{\textit{LoRA's~absolute~accuracy}}  & \textit{92.50} & \textit{90.31} & \textit{91.58} & \textit{94.49} & \textit{89.86} & \textit{96.52} & --
\\
\midrule
LoRA-LEGO        & -- & --& 87.25 & 92.84 & 87.47 & 58.51 & 99.19 & 95.81 & 86.84 \\
RobustMerge & -- & --& 89.14 & 92.13 & 89.56 & 60.98 & 100.00 & 96.20 & 88.00  \\
Iso-CTS & --& -- & 90.91 & 87.83 & 85.02 & 68.76 & 100.81 & 95.96 & 88.21  \\
\midrule
\multirow{7}{*}{Iso-C}
& Full      & \Checkmark & 90.91 & 90.06 & 83.51 & 67.60 & 100.81 & 96.64 & 88.25 \\
& Full      & \XSolid    & 91.59 & 92.00 & 87.51 & 79.00 & 100.00 & 98.20 & 91.38 \\
& KnOTS     & \Checkmark & 86.75 & 90.07 & 89.12 & 56.48 & 100.00 & 95.57 & 86.33 \\
& KnOTS     & \XSolid    & 94.20 & 94.29 & 91.03 & 71.93 & 102.42 & 98.10 & 91.99  \\
& CoreSpace & \Checkmark & 90.90 & 90.03 & 83.49 & 67.57 & 100.81 & 96.69 & 88.25 \\
& CoreSpace & \XSolid    & 91.54 & 90.24 & 87.85 & 76.12 & 99.19 & 97.37 & 90.38  \\
\midrule
\multirow{7}{*}{TIES}
& Full      & \Checkmark & 94.78 & 96.16 & 83.42 & 74.04 & 99.19 & 96.69 & 90.71\\
& Full      & \XSolid    & 94.85 & 96.50 & 82.55 & 75.44 & 99.19 & 95.86 & 90.73\\
& KnOTS     & \Checkmark & 91.66 & 95.05 & 90.34 & 80.43 & 101.61 & 97.32 & 92.74 \\
& KnOTS     & \XSolid    & 91.57 & 95.18 & 90.92 & 81.62 & 100.81 & 97.32 & 92.90  \\
& CoreSpace & \Checkmark & 91.65 & 93.15 & 93.46 & 83.46 & 99.19 & 97.71 & 93.10\\
& CoreSpace & \XSolid    & 92.21 & 93.36 & 94.19 & 83.65 & 99.19 & 97.81 & 93.40 \\
\midrule
\multirow{7}{*}{DARE-TIES}
& Full      & \Checkmark & 94.76 & 96.18 & 83.44 & 74.11 & 99.19 & 96.64 & 90.72 \\
& Full      & \XSolid    & 94.62 & 96.60 & 82.13 & 75.39 & 97.58 & 95.81 & 90.36  \\
& KnOTS     & \Checkmark & 92.10 & 95.50 & 91.30 & 81.71 & 100.81 & 97.22 & 93.11 \\
& KnOTS     & \XSolid    & 92.92 & 94.70 & 93.52 & 80.48 & 101.61 & 97.76 & 93.50 \\
& CoreSpace & \Checkmark & 91.90 & 91.82 & 95.39 & 80.79 & 100.00 & 97.37 & 92.88    \\
& CoreSpace & \XSolid    & 92.12 & 91.99 & 96.64 & 80.31 & 100.00 & 97.66 & 93.12 \\
\midrule
Iso-C       & Ours& -- & 90.73 & 89.72 & 88.07 & 67.67 & 100.81 & 96.0 & 88.83   \\
TIES       & Ours& -- & 91.08 & 90.26 & 96.11 & 89.71 & 100.81 & 96.93 & 94.15  \\
DARE-TIES  & Ours& -- & 92.96 & 94.12 & 97.44 & 94.24 & 97.58 & 97.42 & 95.63  \\
\bottomrule
\end{tabular}
\end{table*}

\clearpage
\subsection{Results with Heterogeneous Settings.}\label{app:sec-heterogeneous-settings}

Our main experiments focus on \emph{homogeneous} LoRA adapters, where all input LoRAs share the same rank and injected modules.
However, CtM can naturally extend to the \emph{heterogeneous} settings, in which different tasks may adopt different input ranks or inject LoRA to different modules.

\paragraph{Heterogeneous ranks.}
Suppose the $t$-th LoRA has rank $r_{\mathrm{in}}^{(t)}$.
In the core-space construction, we concatenate the left and right LoRA factors across tasks as
\[
\begin{gathered}
B_{\mathrm{cat}}
= \big[ B^{(1)}, \ldots, B^{(T)} \big]
=  U_B \Sigma_B V_B^\top \in \mathbb{R}^{n\times \sum_{t=1}^{T}r_{\mathrm{in}}^{(t)}}, \\
A_{\mathrm{cat}}
=
\begin{bmatrix}
A^{(1)}\\
A^{(2)}\\
\vdots\\
A^{(T)}
\end{bmatrix}
=
U_A \Sigma_A V_A^\top  \in \mathbb{R}^{ \sum_{t=1}^{T}r_{\mathrm{in}}^{(t)} \times m}.
\end{gathered}
\]
Let $r_{\mathrm{total}}=\sum_{t=1}^{T} r_{\mathrm{in}}^{(t)}$.
From the thin SVDs above, we obtain $U_B\in\mathbb{R}^{n\times r_{\mathrm{total}}}$ and
$V_A\in\mathbb{R}^{m\times r_{\mathrm{total}}}$, and represent each task update in the core space by
$M^{(t)} = U_B^\top \Delta W^{(t)} V_A \in \mathbb{R}^{r_{\mathrm{total}}\times r_{\mathrm{total}}}$.
We then compute the corresponding surrogate targets $M_{\mathrm{target}}^{(t)}$ (following
Eq.~\ref{eq:get-target}--\ref{eq:get-wt}) and solve Problem~\ref{eq:core-problem} in the
$r_{\mathrm{total}}\times r_{\mathrm{total}}$ core space to obtain
$U_{\mathrm{core}},V_{\mathrm{core}}\in\mathbb{R}^{r_{\mathrm{total}}\times r}$.
Finally, we lift the solution back to the original parameter space via
\[
U = U_B U_{\mathrm{core}}\in \mathbb{R}^{ n \times r} ,\qquad
V = V_A V_{\mathrm{core}}\in \mathbb{R}^{ m \times r}.
\]

Since CtM merges via projections of the {weight updates} $\{\Delta W^{(t)}\}$, it does not require any assumption
that the input adapters share the same rank.
We report results with heterogeneous input ranks in Table~\ref{table:vit-heterogeneous-ranks}. 
RobustMerge does not support heterogeneous-rank inputs and is therefore omitted.
The results show that CtM yields the highest average performance, consistent with the homogeneous-rank results.

\begin{table*}[!ht]
\centering
\caption{Normalized accuracies on 8 vision tasks with ViT-B/32 under {heterogeneous} input ranks.
Each task LoRA uses a different rank, while all methods are evaluated under the same target merged rank $r=16$.}
\label{table:vit-heterogeneous-ranks}

\small
\setlength{\tabcolsep}{3pt}

\begin{tabular}{ll cccccccc A}

\toprule
Method & Space & Cars & DTD & EuroSAT & GTSRB & MNIST & RESISC & SUN397 & SVHN & Avg  \\
\midrule
\multicolumn{2}{l}{\textit{Each LoRA's rank}}  & \textit{16} & \textit{16} & \textit{8} & \textit{8} & \textit{4} & \textit{4} & \textit{32} & \textit{32} & -- 
\\
\multicolumn{2}{l}{\textit{LoRA's~absolute~accuracy}}  & \textit{74.00} & \textit{58.30} & \textit{98.11} & \textit{96.76} & \textit{98.45} & \textit{91.14} & \textit{68.77} & \textit{96.07} & -- 
\\
\midrule
LoRA-LEGO        & -- & 79.01 & 72.26 & 58.51 & 60.22 & 72.52 & 65.99 & 91.75 & 44.98 & 68.16
\\
Iso-CTS & -- & 81.57 & 80.47 & 39.22 & 40.20 & 64.40 & 63.10 & 91.76 & 42.63 & 62.92  \\
\midrule
\multirow{4}{*}{Iso-C}
& Full      & 80.01 & 72.26 & 54.85 & 54.15 & 83.95 & 65.08 & 92.92 & 61.58 & 70.60  \\
& KnOTS     & 79.07 & 72.90 & 40.88 & 33.55 & 92.37 & 65.53 & 91.93 & 40.52 & 64.59  \\
& CoreSpace & 80.01 & 72.17 & 54.62 & 54.07 & 83.96 & 65.15 & 92.93 & 61.59 & 70.56\\ 
\arrayrulecolor{lightgray} 
\cmidrule[0.25pt]{2-11} 
\arrayrulecolor{black}
& CtM (ours) & 80.60 & 72.81 & 56.59 & 55.61 & 82.16 & 65.12 & 92.80 & 62.07 & 70.97 \\
\midrule
\multirow{4}{*}{TIES}
& Full      & 79.05 & 71.35 & 55.91 & 47.21 & 68.91 & 66.37 & 91.48 & 49.00 & 66.16  \\
& KnOTS     & 79.72 & 72.26 & 59.72 & 50.52 & 65.93 & 64.19 & 90.38 & 46.93 & 66.21  \\
& CoreSpace & 77.68 & 72.35 & 61.16 & 48.85 & 89.30 & 65.48 & 92.10 & 58.74 & 70.71 \\
\arrayrulecolor{lightgray} 
\cmidrule[0.25pt]{2-11} 
\arrayrulecolor{black}
& CtM (ours) & 78.27 & 70.89 & 67.91 & 51.96 & 93.58 & 66.56 & 91.88 & 64.85 & 73.24  \\
\midrule
\multirow{4}{*}{DARE-TIES}
& Full      & 79.47 & 71.53 & 55.23 & 48.19 & 69.41 & 66.82 & 91.75 & 47.81 & 66.28  \\
& KnOTS     & 80.79 & 72.90 & 56.02 & 47.12 & 67.52 & 66.96 & 91.42 & 48.47 & 66.40  \\
& CoreSpace & 79.09 & 74.72 & 60.97 & 54.95 & 86.83 & 66.39 & 92.58 & 56.05 & 71.45  \\
\arrayrulecolor{lightgray} 
\cmidrule[0.25pt]{2-11} 
\arrayrulecolor{black}
& CtM (ours) & 78.44 & 69.71 & 68.29 & 53.71 & 92.32 & 66.74 & 91.71 & 63.14 & 73.01\\
\bottomrule
\end{tabular}
\end{table*}

\paragraph{Heterogeneous modules.}
Handling the heterogeneous-module setting is straightforward. For a given layer, we simply skip the LoRAs that are not injected at that layer and only process the remaining ones.
Based on our original ViT setup, we fine-tune four LoRAs with modules injected into \texttt{\{q, v, mlp.fc1, mlp.fc2\}} on the corresponding datasets (EuroSAT, MNIST, RESISC, and SVHN), while keeping the other four LoRAs on the standard \texttt{\{q, k, v, o\}} modules. For layers that are not present in all adapters (e.g., \texttt{mlp.fc1}, \texttt{mlp.fc2}, \texttt{k}, and \texttt{o}), we merge only the updates from the adapters that actually contain that layer. The results in Table~\ref{table:vit-heterogeneous-modules} show that CtM consistently outperforms the corresponding MtC baselines under module heterogeneity. This extends the validation beyond the strictly homogeneous module setting.

\begin{table*}[!ht]
\centering
\caption{Normalized accuracies on 8 vision tasks with ViT-B/32 under {heterogeneous} injected modules.}
\label{table:vit-heterogeneous-modules}

\small
\setlength{\tabcolsep}{3pt}

\begin{tabular}{ll cccccccc A}

\toprule
Method & Space & Cars & DTD & EuroSAT & GTSRB & MNIST & RESISC & SUN397 & SVHN & Avg  \\
\midrule
\multicolumn{2}{l}{\textit{LoRA's~absolute~accuracy}}  & \textit{74.00} & \textit{58.30} & \textit{97.81} & \textit{92.70} & \textit{98.99} & \textit{92.05} & \textit{64.50} & \textit{95.76} & -- 
\\
\midrule
\multirow{4}{*}{Iso-C}
& Full      & 76.76 & 70.16 & 52.71 & 37.47 & 81.60 & 75.17 & 94.89 & 66.55 & 69.41  \\
& KnOTS     & 79.30 & 73.26 & 52.33 & 38.01 & 55.78 & 70.39 & 97.42 & 40.04 & 63.32  \\
& CoreSpace & 76.76 & 70.07 & 52.71 & 37.39 & 81.74 & 75.17 & 94.95 & 66.62 & 69.42\\ 
\arrayrulecolor{lightgray} 
\cmidrule[0.25pt]{2-11} 
\arrayrulecolor{black}
& CtM (ours) & 78.88 & 77.00 & 52.03 & 55.22 & 74.92 & 68.20 & 97.70 & 52.68 & 69.58 \\
\midrule
\multirow{4}{*}{TIES}
& Full      & 79.22 & 71.07 & 43.77 & 37.28 & 61.90 & 73.25 & 96.32 & 51.92 & 64.34  \\
& KnOTS     & 79.13 & 71.99 & 47.78 & 35.97 & 63.68 & 72.39 & 96.32 & 50.81 & 64.76  \\
& CoreSpace & 78.06 & 70.80 & 48.20 & 35.63 & 72.82 & 74.39 & 96.26 & 59.41 & 66.95 \\
\arrayrulecolor{lightgray} 
\cmidrule[0.25pt]{2-11} 
\arrayrulecolor{black}
& CtM (ours) & 76.42 & 69.25 & 50.59 & 35.16 & 75.79 & 75.58 & 95.99 & 75.60 & 69.30  \\
\midrule
\multirow{4}{*}{DARE-TIES}
& Full      & 78.90 & 70.98 & 43.77 & 37.14 & 62.47 & 73.19 & 96.25 & 51.93 & 64.33  \\
& KnOTS     & 78.63 & 71.99 & 48.92 & 35.20 & 64.57 & 72.46 & 96.25 & 51.17 & 64.90  \\
& CoreSpace & 77.70 & 71.44 & 47.25 & 36.92 & 73.19 & 74.70 & 96.55 & 61.46 & 67.40  \\
\arrayrulecolor{lightgray} 
\cmidrule[0.25pt]{2-11} 
\arrayrulecolor{black}
& CtM (ours) & 77.22 & 69.89 & 51.42 & 36.73 & 74.24 & 75.43 & 96.25 & 73.53 & 69.34\\
\bottomrule
\end{tabular}
\end{table*}

\clearpage
\subsection{Scalability to Larger LoRA Collections}
\label{app:sec-larger-T}

To evaluate generalization when merging a larger number of adapters, we extend the setup in Table~\ref{table:vit} by adding eight additional datasets: Caltech101~\cite{DBLP:conf/cvpr/LiFP04}, 
CUB-200-2011~\citep{WahCUB_200_2011}, 
Flowers102~\citep{DBLP:conf/icvgip/NilsbackZ08}, 
OfficeHome~\citep{venkateswara2017deep}, 
Oxford-IIIT Pet~\cite{DBLP:conf/cvpr/ParkhiVZJ12}, 
Food101~\cite{DBLP:conf/eccv/BossardGG14}, 
FER2013~\cite{DBLP:journals/corr/GoodfellowECCMHCTTLZRFLWASMPIPGBXRXZB13}, 
and FashionMNIST~\cite{DBLP:journals/corr/abs-1708-07747}.
We trained one LoRA adapter per dataset.
All adapters use input rank $r_{\mathrm{in}}{=}16$ and are fine-tuned with learning rate $3{\times}10^{-4}$, weight decay $0.1$, batch size $32$, no label smoothing, a linear warm-up of $500$ steps, and at most $50{,}000$ optimization steps with early stopping; all other training and evaluation details follow Appendix~\ref{app:additional-experimental-details}.
The target rank $r$ is also set to 16, consistent with the main experiment.

In Table~\ref{table:more-loras}, we report the detailed results on all 16 datasets. We also include the normalized accuracy of the base model to better illustrate the difficulty of this larger-scale setting and the gains achieved by different merging methods.

From the results, we find that the larger-scale setting is substantially more challenging: task interference becomes stronger and the gains of standard single-adapter merging methods over the base model are modest.
The base CLIP model already attains an average normalized accuracy of $71.80$, while the strongest competing single-LoRA-output baseline (Iso-C in CoreSpace) reaches only about $72.45$.
In contrast, CtM combined with DARE-TIES achieves $73.08$ average normalized accuracy, giving roughly $+1.28$ points over the base model and about $+0.68$ points over the best baseline.
At the same time, the full-parameter Iso-C baseline in CoreSpace attains $77.96$ before truncation, which we view as an upper bound that highlights remaining headroom when many tasks are merged.
Overall, CtM scales gracefully to $T{=}16$ adapters and continues to outperform MtC-style and low-rank-aware baselines in this more challenging regime.

\begin{table*}[!ht]
\centering
\caption{Normalized accuracies on 16 vision tasks with ViT-B/32.}
\label{table:more-loras}
\setlength{\tabcolsep}{3pt}
\tiny
\begin{tabular}{ll cccccccccccccccc Ac}

\toprule
\multirow{2}{*}{Method} 
& \multirow{2}{*}{Space} 
& \multirow{2}{*}{Cars}  
& \multirow{2}{*}{DTD}  
& Euro
& \multirow{2}{*}{GTSRB}  
& \multirow{2}{*}{MNIST}   
& RE
& SUN
& \multirow{2}{*}{SVHN} 
& Caltech & CUB & Flowers & Office & Oxford-  & Food & FER & Fashion &  & \multirow{2}{*}{$\mathrm{Avg}_\mathrm{full}$}  \\

& & & & SAT & & & SISC & 397 & & 101 & 200 & 102 & Home & IIIT Pet & 101 & 2013 & MNIST & \multirow{-2}{*}{Avg} &
\\
\midrule
\multicolumn{2}{l}{\textit{LoRA's~absolute~accuracy}}  & \textit{74.00} & \textit{58.30} & \textit{99.00} & \textit{92.70} & \textit{99.30} & \textit{88.40} & \textit{64.50} & \textit{96.20} &
\textit{95.43} & \textit{62.53} & 
\textit{85.40} & \textit{88.61} &
\textit{90.46} & \textit{84.87} & 
\textit{69.55} & \textit{94.00} & --  & --
\\
\midrule
\multicolumn{2}{l}{Base Model}  
& 80.33
& 75.55
& 46.32
& 34.75
& 48.35
& 68.25
& 98.03
& 32.63
& 89.11
& 81.86
& 75.28
& 98.71
& 93.74
& 98.23
& 57.38
& 70.21
& 71.80
& --
\\

\midrule
LoRA-LEGO        & -- & 79.97 & 73.63 & 45.30 & 38.29 & 48.97 & 68.68 & 97.76 & 34.10 & 89.67 & 82.79 & 75.91 & 98.79 & 93.03 & 98.33 & 57.55 & 72.78 & 72.22 & --
\\
RobustMerge & -- & 80.12 & 74.72 & 46.84 & 37.88 & 49.86 & 68.70 & 98.03 & 33.91 & 89.86 & 82.79 & 75.70 & 98.95 & 93.37 & 98.44 & 56.52 & 71.80 & 72.34 & -- \\
\midrule
\multirow{4}{*}{Iso-C}
& Full   & 80.27 & 74.18 & 43.58 & 37.77 & 50.76 & 68.68 & 97.91 & 34.91 & 89.45 & 83.66 & 75.89 & 99.03 & 93.74 & 98.55 & 58.05 & 72.61 & 72.44 & 77.92\\
& KnOTS & 79.87 & 74.81 & 44.41 & 37.11 & 48.64 & 68.63 & 97.91 & 32.75 & 89.79 & 82.62 & 75.99 & 98.87 & 93.48 & 98.42 & 57.25 & 72.39 & 72.06 & 72.83      \\
& CoreSpace & 80.31 & 74.27 & 43.55 & 37.76 & 50.91 & 68.72 & 97.90 & 34.94 & 89.48 & 83.52 & 75.89 & 98.95 & 93.78 & 98.53 & 58.10 & 72.65 & 72.45 & 77.96   \\
& CtM (ours) &79.72 & 76.18 & 46.32 & 44.73 & 50.62 & 72.65 & 98.21 & 38.16 & 89.67 & 82.20 & 74.80 & 98.71 & 93.10 & 98.17 & 54.62 & 69.93 & 72.99 & -- \\
\midrule
\multirow{4}{*}{TIES}
& Full & 80.27 & 75.36 & 46.13 & 34.92 & 48.34 & 68.30 & 97.90 & 32.73 & 89.22 & 81.89 & 75.43 & 98.63 & 93.48 & 98.24 & 57.50 & 70.81 & 71.82 & 72.08    \\
& KnOTS  & 80.39 & 75.09 & 45.94 & 35.42 & 48.65 & 68.52 & 97.86 & 32.94 & 89.26 & 82.07 & 75.47 & 98.71 & 93.40 & 98.22 & 57.33 & 71.36 & 71.91 & 72.27    \\
& CoreSpace & 80.03 & 75.27 & 46.02 & 35.64 & 49.26 & 68.65 & 97.92 & 33.17 & 89.37 & 83.10 & 75.78 & 99.03 & 93.48 & 98.27 & 57.98 & 72.62 & 72.22 & 74.16  \\
& CtM (ours) & 79.74 & 74.27 & 48.56 & 37.20 & 53.69 & 68.97 & 97.83 & 38.29 & 89.67 & 82.14 & 74.04 & 98.79 & 93.97 & 97.67 & 60.66 & 73.27 & 73.05 & -- \\
\midrule
\multirow{4}{*}{DARE-TIES}
& Full   & 80.29 & 75.27 & 46.20 & 35.09 & 48.36 & 68.30 & 97.91 & 32.81 & 89.18 & 81.93 & 75.40 & 98.63 & 93.52 & 98.23 & 57.60 & 70.70 & 71.84 & 72.15    \\
& KnOTS & 80.46 & 75.27 & 46.13 & 35.11 & 48.48 & 68.38 & 97.90 & 32.77 & 89.11 & 81.86 & 75.47 & 98.63 & 93.52 & 98.26 & 57.25 & 70.78 & 71.84  & 72.34     \\
& CoreSpace & 80.03 & 75.27 & 45.90 & 37.26 & 49.13 & 68.93 & 97.47 & 33.44 & 89.45 & 83.45 & 75.87 & 99.35 & 93.59 & 98.24 & 56.42 & 72.90 & 72.29 & 74.16  \\
& CtM (ours) & 79.80 & 73.63 & 48.63 & 37.23 & 53.95 & 68.97 & 97.91 & 38.72 & 89.67 & 82.07 & 73.87 & 98.79 & 94.01 & 97.52 & 60.98 & 73.50 & 73.08 & -- \\
\bottomrule
\end{tabular}
\end{table*}

\subsection{Experiments on NLG Tasks}
\label{app:sec-nlg}

We conduct additional NLG experiments on LLaMA2-7B. Following the evaluation benchmark released with LoraRetriever~\citep{DBLP:conf/acl/ZhaoG0ZYK024}, we consider five tasks in our evaluation: ARC-Challenge~\citep{DBLP:journals/corr/abs-1803-05457}, WSC~\citep{DBLP:conf/aaaiss/Levesque11},  Glue-QQP~\citep{DBLP:conf/iclr/WangSMHLB19}, E2E-NLG~\citep{novikova2017e2e}, WebNLG~\citep{DBLP:conf/inlg/GardentSNP17}.
We adopt the instruction-style input/target formatting and the task-specific evaluation protocols provided by LoraRetriever. 
For each task, we use the corresponding fine-tuned LoRA adapters for LLaMA2-7B released in LoraRetriever. 
The rank $r_{\mathrm{in}}$ is 8, and the target rank $r$ is also set to 8.
Since TIES is a competitive and representative approach in our main experiments, we instantiate our CtM framework with \textbf{TIES} as the underlying merge operator, and compare our approach against TIES applied in three different parameter spaces: (i) Full space, (ii) KnOTS space, and (iii) CoreSpace. Additionally, we compare against representative low-rank-aware methods, including LoRA-LEGO, RobustMerge, and Iso-CTS. 

As shown in Table~\ref{table:NLG}, CtM consistently yields better average normalized performance across single-LoRA-output baselines, suggesting that CtM is also applicable to generative backbones. 
\begin{table*}[!ht]
\centering
\caption{Normalized performance on 5 NLG tasks with LLaMA2-7B.}
\label{table:NLG}

\small
\setlength{\tabcolsep}{3pt}

\begin{tabular}{ll ccccc A}

\toprule
Method & Space & ARC-Challenge & WSC & Glue-QQP & E2E-NLG & WebNLG & Avg  \\
\midrule
 \multicolumn{2}{l}{\textit{Metric}} & \textit{EM} & \textit{EM} & \textit{EM} & \textit{ROUGE 1} & \textit{ROUGE 1} & \\
\arrayrulecolor{lightgray} 
\cmidrule[0.25pt]{1-8} 
\arrayrulecolor{black}
\multicolumn{2}{l}{\textit{LoRA's~absolute~performance}}  & \textit{46.00} & \textit{50.00} & \textit{74.00} & \textit{66.20} & \textit{70.90} & -- 
\\
\midrule
LoRA-LEGO       & -- & 108.70 & 100.00 & 91.89 & 88.67 & 70.24 & 91.90
\\
RobustMerge     & -- & 56.52 & 92.00 & 24.32 & 76.44 & 56.56 & 61.17
\\
Iso-CTS         & -- & 104.35 & 116.00 & 75.68 & 62.39 & 73.34 & 86.35  \\
\midrule
\multirow{4}{*}{TIES}
& Full      &  113.04 & 96.00 & 67.57 & 96.83 & 79.83 & 90.65  \\
& KnOTS     &  117.39 & 108.00 & 86.49 & 92.75 & 73.76 & 95.68  \\
& CoreSpace &  113.04 & 100.00 & 83.78 & 99.70 & 79.13 & 95.13 \\
\arrayrulecolor{lightgray} 
\cmidrule[0.25pt]{2-8} 
\arrayrulecolor{black}
& CtM (ours) &  113.04 & 104.00 & 89.19 & 97.12 & 78.98 & 96.47  \\
\bottomrule
\end{tabular}
\end{table*}

\subsection{Experiments on Multimodal Tasks}
\label{app:sec-mm-mergebench}

We additionally evaluate CtM on the multimodal benchmark MM-MergeBench~\cite{robustmerge}. Following the benchmark setting, we use LLaVA-1.5-7B~\citep{DBLP:conf/nips/LiuLWL23a} as the base model and consider eight multimodal generative tasks. These tasks cover diverse multimodal abilities, including question answering, grounding, classification, and captioning.

Following MM-MergeBench, we use task-specific rank-16 LoRAs on LLaVA-1.5-7B, with LoRA modules injected into all linear layers. We merge the eight task-specific LoRAs into a single rank-16 LoRA and compare CtM against the corresponding MtC baselines under three base merge rules: Iso-C, TIES, and DARE-TIES. Following the original benchmark, we report raw accuracies rather than normalized accuracies. For hyperparameter tuning, we randomly select 5\% of the benchmark test set to serve as the validation set.

Table~\ref{tab:mm_mergebench_llava} summarizes the results. CtM consistently improves over the compared baselines in this multimodal generative setting, showing that the advantages of CtM extend beyond the classification-oriented settings.

\begin{table}[h]
\centering
\small
\caption{Average performance on the eight seen tasks of MM-MergeBench using LLaVA-1.5-7B. We merge eight rank-16 task-specific LoRAs into a single rank-16 LoRA. CtM consistently outperforms the compared baselines across all three base merge rules.}
\label{tab:mm_mergebench_llava}
\setlength{\tabcolsep}{3pt}
\begin{tabular}{lcccc}
\toprule
Method & Full & KnOTS & CoreSpace & CtM(ours) \\
\midrule
Iso-C      & 56.21 & 56.30 & 56.19 & \textbf{56.65} \\
TIES       & 54.57 & 56.06 & 56.09 & \textbf{56.46} \\
DARE-TIES  & 54.71 & 56.16 & 55.42 & \textbf{56.54} \\
\bottomrule
\end{tabular}

\end{table}

\end{document}